\documentclass[11pt]{article}
\usepackage{geometry}
\geometry{a4paper}
\usepackage{graphicx}
\usepackage{subcaption}
\usepackage{xcolor}
\usepackage{natbib}
\usepackage{amsmath}
\usepackage{amssymb}
\usepackage{amsthm}
\usepackage{bbm}
\usepackage{algorithm}
\usepackage{algorithmicx}
\usepackage{float}
\usepackage{enumitem}
\usepackage{indentfirst}
\setlength{\parindent}{0em}
\setlength{\parskip}{8pt}
\usepackage[colorlinks,
            linkcolor=blue,
            anchorcolor=red ,
            citecolor=blue
            ]{hyperref}

\usepackage{hyperref}
\usepackage{cleveref}

\newtheorem{thm}{Theorem}
\newtheorem{lem}{Lemma}
\newtheorem{pro}{Proposition}
\newtheorem{cor}[thm]{Corollary}
\newtheorem{defi}{Definition}[section]

\newtheorem{ass}{Assumption}

\newcommand{\Xs}{\mathcal{X}}

\newcommand{\As}{\mathcal{A}}
\newcommand{\Ds}{\mathcal{D}}
\newcommand{\Ss}{\mathcal{S}}

\newcommand{\Eb}{\mathbb{E}}
\newcommand{\Pb}{\mathbb{P}}
\newcommand{\Rb}{\mathbb{R}}
\newcommand{\drm}{\mathrm{d}}
\newcommand{\Normal}{\mathcal{N}}  
\newcommand{\Unif}{\mathrm{U}}
\newcommand{\Bin}{\mathrm{Bin}}
\newcommand{\coloneq}{\mathrel{\mathop:}=}

\setlength{\headsep}{0cm}
\setlength{\oddsidemargin}{0cm}
\setlength{\textwidth}{15.5cm}
\setlength{\textheight}{23cm}

\title{PAC Off-Policy Prediction of Contextual Bandits}
\author{Yilong Wan, Yuqiang Li, Xianyi Wu}
\date{\today}

\begin{document}

\maketitle

\begin{abstract}
\noindent This paper investigates off-policy evaluation in contextual bandits, aiming to quantify the performance of a target policy using data collected under a different and potentially unknown behavior policy. Recently, methods based on conformal prediction have been developed to construct reliable prediction intervals that guarantee marginal coverage in finite samples, making them particularly suited for safety-critical applications. To further achieve coverage conditional on a given offline data set, we propose a novel algorithm that constructs probably approximately correct prediction intervals. Our method builds upon a PAC-valid conformal prediction framework, and we strengthen its theoretical guarantees by establishing PAC-type bounds on coverage. We analyze both finite-sample and asymptotic properties of the proposed method, and compare its empirical performance with existing methods in simulations.

\noindent{\bf Key words}:  Contextual Bandits, Off-Policy Evaluation, Probably Approximately Correct Inference,  Conformal Prediction, Reinforcement Learning.
\end{abstract}


\section{Introduction}\label{sec1}

In many fields such as healthcare, marketing, content recommendation,  and aotonomous driving, it is often impractical to directly test and improve a policy in the real world due to ethical considerations, resource constraints and associated risks. In addition, it is sometimes important to understand the potential impact of a decision-making policy prior to deployment.  Therefore, one may seek to evaluate policies using offline data previously collected under usually different behavior policies. This process is known as off-line evaluation, or more generally, off-policy evaluation (OPE).

Contextual bandits are building block models in reinforcement learning (RL). Under a contextual bandit setting, at each time step, the agent observes a context, selects an action according to a given policy, then receives a random reward from the environment that depends on the context-action pair, and after that the environment transition to a new context. Thus, contextual bandits lie between the most general Markov decision process models (requiring general RL algorithms) and the simplest multi-armed bandit processes (requiring more special MAB algorithms) and also provide powerful tools to certain decision-making problems. In a mathematical view of point, it bridges the gap between supervised learning and reinforcement learning.

While the majority of prior work has focused on the expected rewards, they may not be suitable in safety-critical situations due to their inability to capture the variability of the reward. Consequently, recent literature has witnessed the research progress on alternative measures of performance, including variance, quantiles, and conditional values at risk, among others; see e.g., \cite{keramati2020being}, \cite{chandak2021universal} and \cite{huang2021off}.

A promising approach for uncertainty quantification is through prediction interval (PI), which contains the true reward itself with a specified probability. For contextual bandits, \cite{taufiq2022conformal} proposed an algorithm to generate finite-sample valid PIs with stochastic policies and continuous action spaces.  In their method,  PIs are adaptive to test contexts (see Section \ref{sec2} for a definition) and thus of significant interest in fields such as precision medicine \citep{lei2021conformal}. However, their approach requires estimating the probability densities of rewards conditional on context-action pairs, which can be challenging when the model is unknown. \cite{zhang2023conformal} addressed this limitation by  introducing a sub-sampling-based method and extended the framework to both contextual bandits and sequential decision-making scenarios in the environments with discrete action spaces.

Both \cite{taufiq2022conformal} and \cite{zhang2023conformal} employ conformal prediction (CP) \citep{vovk2005algorithmic, shafer2008tutorial, balasubramanian2014conformal}, a well-established and effective method for uncertainty quantification. Through a post-training calibration step, CP guarantees a user-specified coverage in finite samples, relying solely on the exchangeability between the calibration data set and the test point, irrespective of the underlying model or data distribution. This makes CP particularly appealing for safe deployment in high-stakes applications. However, a key limitation of CP is that its validity is inherently unconditional (or marginal), as the coverage holds only under the randomness of both the calibration and test data. This marginal validity can be less appealing when coverage conditional on a fixed calibration data set or individual test point is desired, in which case the resulting PIs can systematically undercover without proper control.

We aim to establish probably approximately correct (PAC) PIs as in \cite{vovk2013conditional}, ensuring that the desired coverage, conditional on the offline data, is achieved with a prescribed confidence level. Such PAC validity is appealing in risk-sensitive applications, where both coverage and a high confidence guarantee are required. To this end, we propose Probably Approximately Correct Off-Policy Prediction (PACOPP), an algorithm for constructing PAC prediction intervals for the OPE task in contextual bandits. PACOPP enjoys both finite-sample theoretical guarantees and adaptivity with respect to the test context, without relying on any distributional or space assumptions.

Our contributions are as follows:

(i) Methodologically, we develop a novel procedure to construct off-policy PAC prediction intervals for a target policy's reward at any test context in bandits. our method achieves stronger validity and is more general than previous works, as it does not require model estimation and can be applied to continuous actions.

(ii)
Theoretically, we establish finite-sample guarantees for both the validity and efficiency of PACOPP. The resulting prediction intervals are shown to be asymptotically equivalent to the optimal ``oracle'' PI when the underlying regression estimators are consistent. Notably, by deriving PAC-type bounds for the first time, we provide a rigorous justification for the efficiency of PAC-valid CP, thereby complementing the theoretical foundations of conformal prediction \citep{angelopoulos2025theoretical}.

The remainder of the paper is structured as follows. The problem is mathematically formulated in Section \ref{sec2} and then the framework of PACOPP is presented in Section  \ref{sec3} with  proofs and certain details postponed in the Appendix. Section \ref{sec4} reports some numerical studies to demonstrate the validity and efficiency of the proposed method. Some relevant literature is reviewed in Section \ref{sec5} and Section \ref{sec6} briefly summarizes the findings and outlines several potential future directions. 


\section{Problem Formulation}\label{sec2}

Roughly write $\Delta(\Xs)$  for the set of all probability distributions over a space $\Xs$ and $[n]$  the  set $\{1,2,\ldots,n\}$ for an integer $n>0$.
Use $S,A,$ and $R$ to represent context, action and reward of a contextual bandit,  taking values in spaces $\Ss,\As,$ and $\Rb$, respectively. 
A policy $\pi$ is a mapping $\pi: \Ss\mapsto\Delta(\As)$. In the OPE setting, we assume access to i.i.d. observations $\Ds=\{S_i, A_i, R_i\}_{i=1}^n$, collected with a behavior policy $\pi_b$, such that $S_i\sim P_S\in\Delta(\Ss)$,  $A_i\sim\pi_b(\,\cdot\mid S_i)$ given $S_i$, and  $R_i\sim P_R(\,\cdot\mid S_i, A_i)$ given $(S_i, A_i$), with $P_R:\Ss\times\As\mapsto\Delta(\Rb)$ mapping a context-action pair to a distribution over $\Rb$.  OPE aims to quantify the target reward $R_{n+1}$ after observing a test context $S_{n+1}$ for a ``future'' contex-action pair $(S_{n+1}, A_{n+1}, R_{n+1})$ that is independent of $\Ds$ but under the under the target policy $\pi_e$ such that
\begin{equation}\label{JointDisTargt}
(S_{n+1},  R_{n+1})\sim P^{\pi_e}(s, r)\coloneq P_S( s) \int_\As P_R(r|s,a)\pi_e( a |s)\drm a.
\end{equation}

With the data set $\Ds$,  {\color{blue}CP-based methods} construct a  prediction interval (PI) $C_\Ds(S_{n+1})$ such that
\begin{equation}\label{eq-COPP}
	\Pb\left[R_{n+1}\in C_\Ds(S_{n+1})\right] \geq 1-\epsilon,
\end{equation}
where $\epsilon$ is a pre-specified failure rate. The probability in (\ref{eq-COPP}) is computed under the joint distribution of $(S_{n+1},R_{n+1})$ in \eqref{JointDisTargt} that is induced by $\pi_e$ but the data set $\Ds$ induced by $\pi_b$. Hence, \eqref{eq-COPP} unnecessarily induces the same conditional guarantee for given $S_{n+1}$ (referred to as object conditional validity) or data set $\mathcal{D}$ (referred to as training conditional validity). Due to the inherent impossibility of designing non-trivial algorithms that output distribution-free PIs with object conditional validity (see e.g., \citealt{foygel2021limits}), the extensive efforts have been put on probably approximately correct (PAC) PIs, an relaxed version of training conditional validity, as defined below.

\begin{defi}[\citealt{park2020pac}] \label{PAC}For a pair of $\epsilon, \delta\in(0,1)$, a set-valued function $\hat{C}: \Ss\mapsto 2^{\Rb}$ constructed from $\Ds$ is  {\it probably approximately correct} (PAC) if
	\[
		\Pb\left[\Pb\left(\left. R_{n+1}\in \hat C (S_{n+1})\right|\Ds\right)\geq 1-\epsilon \right] \geq 1-\delta.
	\] 
In this case, the PI $\hat C (S_{n+1})$ is referred to as an $(\epsilon,\delta)$-PAC PI of $R_{n+1}$.  
\end{defi}
To simplify the representations, we sometimes use the notation
$$L_{P^{\pi_e}}(\hat C)\coloneq \Pb\left[ \left. R_{n+1}\notin \hat C(S_{n+1})\right|\Ds \right],$$ so that a PAC PI can be defined by $\Pb\left[L_{P^{\pi_e}}(\hat C)\leq \epsilon \right] \geq 1 - \delta$.

This paper is dedicated to develop algorithms that output PIs with PAC validity defined in Definition \ref{PAC} in a contextual bandits off-policy setting.


\section{PACOPP Algorithms}\label{sec3}

The key working condition underpinning standard CP methods is the exchangeability (including i.i.d.) between the test and observational data. However,  the difference between $\pi_e$ and $\pi_b$ induces a distribution shift of  $P^{\pi_e}$ of the test data from $P^{\pi_e}$ of the observational data in $\Ds$, making the standard CP methods inapplicable. An extension is  the weighted CP framework  \citep{tibshirani2019conformal} that,  when applied in OPE setting, requires the weighted exchangeability (including independence)  that necessitates estimating a density ratio \begin{equation}\label{COPP-weight}
	\frac{P^{\pi_e}}{P^{\pi_b}}( s,  r) = \frac{\int_\As P_R( r | s, a) \pi_e( a | s) \drm a }{\int_\As P_R( r | s, a) \pi_b( a|s) \drm a },
\end{equation}
dut to the unknown reward distribution $P_R$.

{\color{blue}We take another workable idea that picks out a subset from the data set $\Ds$  by  a rejection sampling (RS) strategy whose distribution resembles the target distribution \citep{neumann1951various, owen2013monte, park2022pac}.}

This section thus proceeds in three subsections: The RS procedure to estimate $P^{\pi_e}$ from the data collected by $\pi_b$, the PACOPP algorithm with known $\pi_b$ and the PACOPP algorithm with unknown $\pi_b$.

\subsection{RS Procedure}
Define a weight function 
\(\label{weightdef}
	w(s,a)\coloneq\frac{\pi_e}{\pi_b}( a|s),\hbox{ with the convention }w(s,a)=0\hbox{ if }\pi_b( a|s)=\pi_e(a|s)=0.
\)  Denote by $B=\inf\{b: \pi_e(a|s)\leq b\pi_b(a|s) \hbox{ for all }(s,a)\}$. Then $B\geq 1$ and the equality holds only when $\pi_e$ and $\pi_b$ are identical, because no distribution can be dominated strictly by another one. The paper proceeds silently under the assumption that $B<\infty$, which indicating $\displaystyle\sup_{(s,a)\in\Ss\times\As} w(s,a) <\infty$, a measure of the distribution dismatch between the target and behavior policies.

Let $V_i, i=1,2, \dots\overset{iid}{\sim}U([0,1])$ be another sample independent of $\cal D$. The RS procedure first picks out a subset
\begin{equation}\label{def-Rs}
	\Ds_{\mathrm{rs}} \coloneq \left\{(S_i,R_i) \mid (S_i,A_i,R_i)\in\Ds , V_i\leq \frac{1}{B}w(S_i,A_i) \right\}
\end{equation}
of a random size $N_{\mathrm{rs}}\leq n$, and then sort the elements of $\Ds_{\mathrm{rs}} $ in ascending order by their original indices in $\Ds$ and denote by $Z_j$ the $j$-th so that $\Ds_{\mathrm{rs}}=\{Z_j, j\in[N_{\mathrm{rs}}]\}$. Then we have the following fundamental Proposition \ref{pro-1} which is a variant of Theorem 4.2 in \cite{owen2013monte} (proof in Appendix \ref{pf-pro-1}).

\begin{pro}\label{pro-1}
	Given $N_{\mathrm{rs}}$, the data $\{Z_j, j\in[N_{\mathrm{rs}}]\}\overset{\text{i.i.d.}}{\sim}P^{\pi_e}$, writing $\Ds_{\mathrm{rs}}\sim (P^{\pi_e})^{N_{\mathrm{rs}}}$.
\end{pro}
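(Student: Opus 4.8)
The plan is to run the classical rejection-sampling argument of \citet{owen2013monte}, Theorem 4.2, adapted to the off-policy bookkeeping here; the essential point is that multiplying the proposal sub-density against the acceptance probability makes the behavior policy cancel, leaving a density proportional to $\pi_e$. First I would compute the acceptance probability of a single draw. Because $V_i\sim\Unif([0,1])$ is independent of $(S_i,A_i)$ and $\tfrac{1}{B}w(s,a)\le 1$ holds by the definition of $B$, the conditional acceptance probability is
$$\Pb\!\left[V_i\le \tfrac{1}{B}w(S_i,A_i)\ \middle|\ S_i=s,\,A_i=a\right]=\frac{1}{B}\,\frac{\pi_e(a\mid s)}{\pi_b(a\mid s)}.$$
Multiplying by the density $P_S(s)\,\pi_b(a\mid s)\,P_R(r\mid s,a)$ of $(S_i,A_i,R_i)$ under $\pi_b$, the factor $\pi_b(a\mid s)$ cancels, so the sub-density of the event ``accept the $i$-th draw and observe $(s,a,r)$'' equals $\tfrac{1}{B}P_S(s)\,\pi_e(a\mid s)\,P_R(r\mid s,a)$.

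Next I would integrate out the action. Marginalizing this sub-density over $a\in\As$ gives $\tfrac{1}{B}P_S(s)\int_\As P_R(r\mid s,a)\,\pi_e(a\mid s)\,\drm a=\tfrac{1}{B}P^{\pi_e}(s,r)$ by \eqref{JointDisTargt}. Integrating once more over $(s,r)$ shows that the marginal acceptance probability of any single draw is $\Pb[E_i]=1/B$, where $E_i$ denotes the acceptance event. Dividing the sub-density by this constant, the conditional law of an accepted draw is exactly $P^{\pi_e}$, that is, $\Pb[(S_i,R_i)\in\cdot\mid E_i]=P^{\pi_e}(\cdot)$.

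Finally I would upgrade this per-draw fact to the joint i.i.d. claim conditional on $N_{\mathrm{rs}}$. Since the tuples $(S_i,A_i,R_i,V_i)$ are i.i.d. across $i$, the indicators $\mathbbm{1}\{E_i\}$ are i.i.d. with mean $1/B$ and $N_{\mathrm{rs}}=\sum_i\mathbbm{1}\{E_i\}\sim\Bin(n,1/B)$. For bounded test functions $g_1,\dots,g_m$ and the accepted points $Z_1,\dots,Z_m$ ordered by original index, I would expand $\Eb[\prod_k g_k(Z_k)\,\mathbbm{1}\{N_{\mathrm{rs}}=m\}]$ as a sum over the $\binom{n}{m}$ possible accepted index sets $\mathcal I=\{i_1<\cdots<i_m\}$. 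Independence across $i$ factorizes each summand as $\prod_k\Eb[g_k(S_{i_k},R_{i_k})\mathbbm{1}\{E_{i_k}\}]\cdot\prod_{j\notin\mathcal I}\Pb[E_j^c]$, which by the previous step equals $(1/B)^m(1-1/B)^{n-m}\prod_k\Eb_{P^{\pi_e}}[g_k]$ and is independent of the particular $\mathcal I$. Summing over all $\binom{n}{m}$ sets and dividing by $\Pb[N_{\mathrm{rs}}=m]=\binom{n}{m}(1/B)^m(1-1/B)^{n-m}$ yields $\Eb[\prod_k g_k(Z_k)\mid N_{\mathrm{rs}}=m]=\prod_k\Eb_{P^{\pi_e}}[g_k]$, which is precisely the assertion that $Z_1,\dots,Z_{N_{\mathrm{rs}}}$ are i.i.d. $P^{\pi_e}$ given $N_{\mathrm{rs}}$.

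The main obstacle is this last step rather than the density algebra: one must check that conditioning on the random count $N_{\mathrm{rs}}$, together with which positions were accepted, does not introduce dependence among the retained points. The two facts that make it go through cleanly are that the acceptance indicators are i.i.d. and that the factorized summand depends on $\mathcal I$ only through its cardinality $m$, so the combinatorial factor $\binom{n}{m}$ cancels exactly against $\Pb[N_{\mathrm{rs}}=m]$.
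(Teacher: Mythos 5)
Your proposal is correct and follows essentially the same route as the paper: compute the per-draw acceptance sub-density so that $\pi_b$ cancels and the conditional law of an accepted point is $P^{\pi_e}$, then decompose over the $\binom{n}{m}$ possible accepted index sets and cancel the combinatorial factor against $\Pb[N_{\mathrm{rs}}=m]$. The only difference is cosmetic: you phrase the joint-distribution step via bounded test functions, whereas the paper writes the same factorization directly in terms of (somewhat informal) pointwise probabilities $\Pb(Z_j=(s_j,r_j))$; your version is, if anything, slightly cleaner for continuous distributions.
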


\subsection{PACOPP with Known $\pi_b$}
Inspired by \cite{vovk2013conditional} and \cite{park2020pac}, Proposition \ref{pro-1} above suggests an algorithm (Algorithm~\ref{al-known}) to construct a PAC PI of the form  in Definition \ref{PAC} when the behavior policy $\pi_b$ is known.

{\bf Algorithm.} Building on the data $\Ds_{\mathrm{rs}}$ from the RS procedure, we adapt the conformal quantile regression approach \citep{romano2019conformalized} with a modified cut-off threshold. The algorithm is outlined as three steps as below with a pseudo code presented in Algorithm \ref{al-known}.  

{\bf (1) Split data.} With a fixed proportion $\gamma\in(0,1)$, split the data set $\Ds_{\mathrm{rs}}$ into a training set $\Ds^{\mathrm{rs}}_1$ of size $L=N_{\mathrm{rs}}-M$ and a calibration set $\Ds^{\mathrm{rs}}_2$ of size $M = \lceil\gamma N_{\mathrm{rs}}\rceil$. 

{\bf (2) Construct candidates.} Fix a pair $(\epsilon_{\mathrm{lo}} , \epsilon_{\mathrm{up}} )$ with $\epsilon_{\mathrm{up}} - \epsilon_{\mathrm{lo}} = 1 - \epsilon$. Using the training set $\Ds^{\mathrm{rs}}_1$, compute the $\epsilon_{\mathrm{lo}}$ and $\epsilon_{\mathrm{up}}$ conditional quantiles $\hat{q}_{\epsilon_{\mathrm{lo}}}(S)$ and $\hat{q}_{\epsilon_{\mathrm{up}}}(S)$ of $R$ given $S$ by means of any suitable method, such as linear regression \citep{koenker1978regression}, neural networks \citep{taylor2000quantile} or random forests \citep{meinshausen2006quantile}, i.e., the algorithm $\mathcal{A}_\mathrm{qu}$ stated in the Input part of Algorithm \ref{al-known}. \footnote{For the sake of simplicity in the discussion, we assume no additional exogenous randomness is introduced by any of the quantile prediction algorithm.} Then, construct a family of intervals
\begin{equation}\label{def-Ctau}
	\hat{C}_\tau(S)    \coloneq[\hat{q}_{\epsilon_{\mathrm{lo}}}(S)-\tau,\hat{q}_{\epsilon_{\mathrm{up}}}(S)+\tau],
\end{equation} increasing in $\tau$.

{\bf (3) Pick up the desired.} For every $(S_i,R_i)\in\Ds^{\mathrm{rs}}_2$, define a non-conformity score
\begin{equation}\label{def-tilde-tau-i}
	\tau_i:=\min\{\tau: R_i\in [\hat{q}_{\epsilon_{\mathrm{lo}}}(S_i)-\tau,\hat{q}_{\epsilon_{\mathrm{up}}}(S_i)+\tau]\} =\max\{ \hat{q}_{\epsilon_{\mathrm{lo}}}(S_i)-R_i,R_i-\hat{q}_{\epsilon_{\mathrm{up}}}(S_i)\},
\end{equation}
so as to quantify the extent to which the observation $(S_i,R_i)\in\Ds^{\mathrm{rs}}_2$ ``conforms'' to the training set $\Ds^{\mathrm{rs}}_1$. Write $\tau_{(i)}, i=1,2,\dots, M$ for the increasingly ordered $\tau_i$'s and $\tau_{(M+1)}=\infty$, with ties arbitrarily broken. Of a binomial distribution $\Bin(M,\epsilon)$, denote by $F_{\Bin(M,\epsilon)}(\cdot)$ the cumulative distribution function and 
\begin{equation}\label{def-k}
	k(M,\epsilon,\delta)\coloneq\underset{k\in\{-1,0,\ldots,M-1\}}{\max}\{k: F_{\Bin(M,\epsilon)}(k)\leq\delta\}.
\end{equation}
the $\delta$-quantile, where $k(M,\epsilon,\delta)=-1$ if $F_{\Bin(M,\epsilon)}(0)=(1-\epsilon)^M>\delta$. Define \begin{equation}\label{def-tilde_tau}
	\tilde{\tau}\coloneq\tau_{(M-k(M,\epsilon,\delta))}.
\end{equation}  
We have the following Theorem \ref{thm-1} (proof in Appendix \ref{pf-thm-1}).

\begin{algorithm}[ht]
	\caption{PAC Off-Policy Prediction with known behavior Policy} 
	\label{al-known}        
	\begin{algorithmic}[1]     
		\Statex \textbf{Input:} Observational data $\Ds=\{S_i, A_i, R_i\}_{i=1}^n$; test context $S_{n+1}$; PAC parameters $(\epsilon,\delta)\in(0,1)^2$; behavior policy $\pi_b$; target policy $\pi_e$; quantile estimation algorithm $\mathcal{A}_\mathrm{qu}$; quantile levels $\epsilon_{\mathrm{lo}},\epsilon_{\mathrm{up}}$ with $\epsilon_{\mathrm{up}}-\epsilon_{\mathrm{lo}}=1-\epsilon$; calibration ratio $\gamma$
		\Statex \textbf{Output:} Prediction interval $\hat{C}_{\tilde{\tau}}(S_{n+1})$
		
		\vspace{2ex}
		\State \textbf{The procedure}:
		\State \quad $\Ds_{\mathrm{rs}} \gets \textsc{Rs} (\Ds, w(s,a) = \frac{\pi_e}{\pi_b}( a|s))$
		\State \quad Split $\Ds_{\mathrm{rs}}$ into $\Ds^{\mathrm{rs}}_1\cup\Ds^{\mathrm{rs}}_2$ with $|\Ds^{\mathrm{rs}}_2| =\lceil\gamma |\Ds_{\mathrm{rs}}| \rceil $ 
		\State \quad Train $\hat{q}_{\epsilon_{\mathrm{lo}}}, \hat{q}_{\epsilon_{\mathrm{up}}}$ using $\mathcal{A}_\mathrm{qu}$ on $\Ds^{\mathrm{rs}}_1$
		\State \quad \textbf{return} $\hat{C}_{\tilde{\tau}}(S_{n+1})$ = \textsc{Pac-Cp}($\Ds^{\mathrm{rs}}_2, S_{n+1},  \epsilon, \delta, \hat{q}_{\epsilon_{\mathrm{lo}}}, \hat{q}_{\epsilon_{\mathrm{up}}}$)
		
		\vspace{2ex}
		\State \textbf{function} \textsc{Rs}($\Ds$, $w(s,a)$):
		\State \quad Compute $B \gets \sup_{(s,a)\in\Ss\times\As}w(s,a)$
		\State \quad Sample $V_i\sim \Unif([0,1])$ for $i\in[|\Ds|]$
		\State \quad \textbf{return} $\left\{(S_i,R_i) \mid (S_i,A_i,R_i)\in\Ds , V_i\leq \frac{1}{B}w(S_i,A_i) \right\}$
		
		\vspace{2ex}
		\State \textbf{function} \textsc{Pac-Cp}($\Ds, S, \epsilon, \delta, \hat{q}_{\epsilon_{\mathrm{lo}}}, \hat{q}_{\epsilon_{\mathrm{up}}}$):
		\State \quad $k \gets \max\{k \in \{-1, \ldots, |\mathcal{D}| - 1\} : F_{\Bin(|\mathcal{D}|, \epsilon)}(k) \leq \delta\}$
		\State \quad $\tau_i \gets \max\{ \hat{q}_{\epsilon_{\mathrm{lo}}}(S_i)-R_i,R_i-\hat{q}_{\epsilon_{\mathrm{up}}}(S_i)\}$ for $(S_i,R_i)\in \Ds$
		\State \quad $\tilde{\tau} \gets (|\Ds|-k)$-th smallest value among $\{\tau_i\}$
		\State \quad \textbf{return} $[\hat{q}_{\epsilon_{\mathrm{lo}}}(S)-\tilde{\tau},\hat{q}_{\epsilon_{\mathrm{up}}}(S)+\tilde{\tau}]$
	\end{algorithmic}
\end{algorithm}	

\begin{thm}\label{thm-1}
	The output $\hat{C}_{\tilde{\tau}}(S_{n+1})$ of Algorithm~\ref{al-known} is a ($\epsilon,\delta$)-PAC PI of $R_{n+1}$, i.e., $\Pb\left[ L_{P^{\pi_e}}(\hat{C}_{\tilde{\tau}})\leq\epsilon\right] \geq 1-\delta$. 
\end{thm}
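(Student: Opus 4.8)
The plan is to condition on the information used to fit the two quantile functions and then run the order-statistic argument behind PAC split conformal prediction. Write $\tau_{n+1}\coloneq\max\{\hat q_{\epsilon_{\mathrm{lo}}}(S_{n+1})-R_{n+1},\,R_{n+1}-\hat q_{\epsilon_{\mathrm{up}}}(S_{n+1})\}$ for the score of the test point. By the same algebra that produced \eqref{def-tilde-tau-i}, and by the definition \eqref{def-Ctau} of $\hat C_\tau$, we have $R_{n+1}\in\hat C_{\tilde\tau}(S_{n+1})$ if and only if $\tau_{n+1}\le\tilde\tau$, so that $L_{P^{\pi_e}}(\hat C_{\tilde\tau})=1-G(\tilde\tau)$, where $G$ is the conditional CDF of $\tau_{n+1}$ under $(S_{n+1},R_{n+1})\sim P^{\pi_e}$ given the fitted quantile maps. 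I would then condition throughout on the $\sigma$-field $\mathcal F$ generated by the training set $\Ds^{\mathrm{rs}}_1$ together with $N_{\mathrm{rs}}$ (hence $M$); given $\mathcal F$ the maps $\hat q_{\epsilon_{\mathrm{lo}}},\hat q_{\epsilon_{\mathrm{up}}}$ and $G$ are deterministic, so it suffices to show $\Pb[\,1-G(\tilde\tau)>\epsilon\mid\mathcal F\,]\le\delta$, since integrating over $\mathcal F$ yields the unconditional claim.

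Next I would invoke Proposition~\ref{pro-1}: conditionally on $N_{\mathrm{rs}}$ the points of $\Ds_{\mathrm{rs}}$ are i.i.d.\ $P^{\pi_e}$, so given $\mathcal F$ the calibration scores $\tau_1,\dots,\tau_M$ are i.i.d.\ with CDF $G$ and are independent of the test score $\tau_{n+1}$ (also drawn from $G$). Because $G$ is nondecreasing, order statistics commute with it, whence $G(\tilde\tau)=G(\tau_{(M-k)})=W_{(M-k)}$, the $(M-k)$-th order statistic of $W_i\coloneq G(\tau_i)$, with $k=k(M,\epsilon,\delta)$. The event $\{1-G(\tilde\tau)>\epsilon\}=\{W_{(M-k)}<1-\epsilon\}$ is exactly the event that at least $M-k$ of the $W_i$ fall below $1-\epsilon$, i.e.\ $\{\sum_{i=1}^M\mathbbm 1[W_i<1-\epsilon]\ge M-k\}$.

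Finally I would control this binomial tail. The indicators $\mathbbm 1[W_i<1-\epsilon]$ are i.i.d.\ Bernoulli with parameter $p\coloneq\Pb[G(\tau)<1-\epsilon]$, and the generalized-inverse (super-uniformity) inequality $\Pb[G(\tau)<u]\le u$ gives $p\le 1-\epsilon$. Hence $\sum_i\mathbbm 1[W_i<1-\epsilon]\sim\Bin(M,p)$ is stochastically dominated by $\Bin(M,1-\epsilon)$, and since $\{\Bin(M,1-\epsilon)\ge M-k\}$ has the same probability as $\{\Bin(M,\epsilon)\le k\}$,
\[
\Pb\!\left[\,1-G(\tilde\tau)>\epsilon\mid\mathcal F\,\right]\le F_{\Bin(M,\epsilon)}(k)\le\delta,
\]
the last step by the definition \eqref{def-k} of $k(M,\epsilon,\delta)$; in the degenerate case $k=-1$ one has $\tilde\tau=\tau_{(M+1)}=\infty$, so $L_{P^{\pi_e}}=0$ and $F_{\Bin(M,\epsilon)}(-1)=0\le\delta$. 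Integrating over $\mathcal F$ finishes the proof.

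The main obstacle is the care needed around discontinuity and randomness rather than any deep step. If the score distribution $G$ has atoms (ties among the $\tau_i$), the probability integral transform fails and $G(\tau)$ is only super-uniform, which is why I route the final bound through the inequality $\Pb[G(\tau)<u]\le u$ and a stochastic-dominance comparison instead of an exact $\mathrm{Beta}$ identity. The other point demanding attention is that the calibration size $M=\lceil\gamma N_{\mathrm{rs}}\rceil$ is itself random; placing $N_{\mathrm{rs}}$ inside $\mathcal F$ neutralizes this, and because the conditional bound is $\delta$ for every admissible value of $M$, it survives the integration over $\mathcal F$.
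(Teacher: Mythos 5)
Your proof is correct, and it reaches the paper's own key reduction --- conditioning on $N_{\mathrm{rs}}$ and the training half so that the calibration scores are i.i.d.\ (via Proposition~\ref{pro-1}), expressing the failure event as a count of calibration scores landing on the wrong side of a threshold, bounding that count by the binomial tail $F_{\Bin(M,\epsilon)}(k)\le\delta$, treating $k=-1$ separately, and marginalizing. Where you differ is in the technical device used to handle the delicate point (possible atoms in the score distribution): the paper never introduces the CDF $G$ at all; instead it defines $\tau^*=\inf\{\tau: L_{P^{\pi_e}}(\hat C_\tau)\le\epsilon\}$, uses the monotonicity and right-continuity of $\tau\mapsto L_{P^{\pi_e}}(\hat C_\tau)$ to write $\{L_{P^{\pi_e}}(\hat C_{\tilde\tau})>\epsilon\}=\bigcup_j\{\tilde\tau\le\tau^*-\alpha_j\}$ for a sequence $\alpha_j\downarrow0$, notes that each calibration point independently satisfies $R_i\notin\hat C_{\tau^*-\alpha_j}(S_i)$ with probability strictly greater than $\epsilon$, and passes to the limit by continuity of measure. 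You instead push everything through the probability integral transform, replacing the exact-uniformity that fails under ties by the super-uniformity inequality $\Pb[G(\tau)<u]\le u$ together with stochastic dominance of $\Bin(M,p)$ by $\Bin(M,1-\epsilon)$. The two devices are interchangeable here and both are rigorous; the paper's $\tau^*$ formulation has the incidental advantage that the same object and the same counting identity are reused verbatim (with the reverse inclusion) in the proofs of Theorems~\ref{lem-upperbound} and \ref{thm-bothsides}, while your PIT formulation is the more standard packaging of PAC split conformal and makes the role of distribution-freeness slightly more transparent. One small bookkeeping point: your $\sigma$-field $\mathcal F$ should also be understood to fix the partition of $\Ds_{\mathrm{rs}}$ (as the paper states explicitly), so that which indices form the calibration set is not informative about their values; with that understanding your integration over $\mathcal F$ is exactly the paper's double marginalization over $\Ds^{\mathrm{rs}}_1$ and then $N_{\mathrm{rs}}$.
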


The procedure \textsc{Pac-Cp} in Algorithm~\ref{al-known} is different from standard split CP that uses empirical $1-\epsilon$ quantile of non-conformity scores to construct a $1-\epsilon$ marginally valid PI, which has been shown to simultaneously achieve $(\varepsilon,\delta)$-PAC validity for $\varepsilon \geq \epsilon + \sqrt{\frac{-\log \delta}{2M}}$ \citep{vovk2013conditional}. Clearly, in order to obtain an $(\epsilon, \delta)$-PAC valid PI, split CP needs to use the $1-\epsilon + \sqrt{\frac{-\log \delta}{2M}}$ quantile that requires  $M\geq(-\log \delta)/2\epsilon^2$ to ensure $1-\epsilon + \sqrt{\frac{-\log \delta}{2M}}\leq 1$. This additional requirement makes split CP inapplicable in certain circumstances. 

Recall that each $\tau_i$ corresponds to the smallest $\tau$ such that $R_i\in\hat{C}_\tau(S_i)$, see (\ref{def-tilde-tau-i}). It can be verified that the threshold $\tilde{\tau} = \tau_{(M-k(M,\epsilon,\delta))}$ can also be expressed as 
$$
\tilde{\tau} =\min \left\{\tau: \sum_{(S_i,R_i)\in\Ds^{\mathrm{rs}}_2} \mathbbm{1} [R_i\notin C_\tau(S_i)] \leq k(M,\epsilon,\delta)\right\},
$$
a formula originally proposed by \cite{park2020pac}. Intuitively, this procedure identifies the narrowest PI that satisfies an upper bound on the empirical miscoverage, a constraint specifically designed to ensure the PAC guarantee. Theorems \ref{lem-upperbound} and \ref{thm-bothsides} (proofs in Appendix \ref{pf-thm-bothsides} and \ref{pf-thm-bothsides-2}) further provide a theoretical justification for the efficiency of $\hat{C}_{\tilde{\tau}}$ by establishing PAC-type bounds on its miscoverage.

\begin{thm}\label{lem-upperbound}
	Suppose that there are no ties among the $\tau_i$'s almost surely. 
	Then, the output  $\hat{C}_{\tilde\tau}$ of Algorithm~\ref{al-known} satisfies for all $n\geq1$,
	\begin{equation}\label{eq-lem-2}
		\Pb\left[ L_{P^{\pi_e}}(\hat{C}_{\tilde{\tau}})\leq\epsilon\right] < 1-\delta+\frac{C}{\sqrt{n}}, 
	\end{equation}
	where $C=\frac{7B}{\sqrt{\gamma\epsilon(1-\epsilon)}}+\sqrt{ \lfloor m_0/\gamma \rfloor B} + \frac{B}{2}$, $B$ is the constant used in the RS procedure and $m_0=\log_{1-\epsilon}{\delta}$.
\end{thm}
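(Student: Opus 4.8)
The plan is to determine the exact conditional law of the miscoverage $L_{P^{\pi_e}}(\hat{C}_{\tilde{\tau}})$ given the accepted sample size, and then integrate its $\epsilon$-tail against the randomness of that size. First I would condition on $N_{\mathrm{rs}}$, hence on the calibration size $M=\lceil\gamma N_{\mathrm{rs}}\rceil$. By Proposition \ref{pro-1} the elements of $\Ds_{\mathrm{rs}}$ are i.i.d.\ $P^{\pi_e}$, so conditioning further on the training fold $\Ds^{\mathrm{rs}}_1$ (which fixes $\hat{q}_{\epsilon_{\mathrm{lo}}},\hat{q}_{\epsilon_{\mathrm{up}}}$, hence the score map) makes the calibration scores $\tau_1,\dots,\tau_M$ and the test score $\tau_{n+1}=\max\{\hat{q}_{\epsilon_{\mathrm{lo}}}(S_{n+1})-R_{n+1},R_{n+1}-\hat{q}_{\epsilon_{\mathrm{up}}}(S_{n+1})\}$ i.i.d.\ with a common conditional CDF $F$. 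Since $R_{n+1}\notin\hat{C}_{\tilde{\tau}}(S_{n+1})$ iff $\tau_{n+1}>\tilde{\tau}$, we have $L_{P^{\pi_e}}(\hat{C}_{\tilde{\tau}})=1-F(\tilde{\tau})$. Under the no-ties hypothesis $F$ is continuous on the score support, so the probability integral transform gives $F(\tau_i)\overset{iid}{\sim}\Unif([0,1])$; as $\tilde{\tau}=\tau_{(M-k)}$ with $k=k(M,\epsilon,\delta)\ge0$, the quantity $F(\tilde{\tau})$ is the $(M-k)$-th order statistic of $M$ uniforms, whence $L_{P^{\pi_e}}(\hat{C}_{\tilde{\tau}})\sim\mathrm{Beta}(k+1,M-k)$ given $M$.

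Next I would convert this Beta tail into a binomial one. Writing $I_\epsilon(\cdot,\cdot)$ for the regularized incomplete beta function, the Beta--Binomial identity yields $\Pb[L_{P^{\pi_e}}(\hat{C}_{\tilde{\tau}})\le\epsilon\mid M]=I_\epsilon(k+1,M-k)=1-F_{\Bin(M,\epsilon)}(k)$. Because $k(M,\epsilon,\delta)$ is the largest index with $F_{\Bin(M,\epsilon)}(k)\le\delta$, this both recovers the lower bound $1-F_{\Bin(M,\epsilon)}(k)\ge1-\delta$ of Theorem \ref{thm-1} and, crucially, supplies the matching upper control
\[
1-F_{\Bin(M,\epsilon)}\big(k(M,\epsilon,\delta)\big)<1-\delta+\Pb\big[\Bin(M,\epsilon)=k(M,\epsilon,\delta)+1\big],
\]
since $F_{\Bin(M,\epsilon)}(k+1)>\delta$ forces $\delta-F_{\Bin(M,\epsilon)}(k)<\Pb[\Bin(M,\epsilon)=k+1]$; this already accounts for the strict inequality in \eqref{eq-lem-2}. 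The degenerate regime must then be isolated: $k(M,\epsilon,\delta)=-1$ exactly when $(1-\epsilon)^M>\delta$, i.e.\ $M<m_0=\log_{1-\epsilon}\delta$, in which case $\tilde{\tau}=\tau_{(M+1)}=\infty$, $\hat{C}_{\tilde{\tau}}=\Rb$ and $L_{P^{\pi_e}}(\hat{C}_{\tilde{\tau}})=0$, contributing an excess of exactly $\delta\,\Pb[M<m_0]\le\Pb[M<m_0]$.

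Finally I would take expectations over $M$. Combining the two regimes gives
\[
\Pb\big[L_{P^{\pi_e}}(\hat{C}_{\tilde{\tau}})\le\epsilon\big]<1-\delta+\Pb[M<m_0]+\Eb\Big[\mathbbm{1}\{M\ge m_0\}\,\Pb\big[\Bin(M,\epsilon)=k(M,\epsilon,\delta)+1\big]\Big],
\]
and it remains to bound the last two terms by $C/\sqrt{n}$. For the binomial-jump term I would invoke the standard non-asymptotic bound on the maximal binomial mass, $\max_j\Pb[\Bin(M,\epsilon)=j]\le c_0/\sqrt{M\epsilon(1-\epsilon)}$, and then control $\Eb[M^{-1/2}\mathbbm{1}\{M\ge m_0\}]$. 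Here the acceptance mechanism is decisive: each observation is retained independently with probability $\Eb[w(S,A)]/B=1/B$, so $N_{\mathrm{rs}}\sim\Bin(n,1/B)$ and $M\ge\gamma N_{\mathrm{rs}}$; negative-moment estimates such as $\Eb[(N_{\mathrm{rs}}+1)^{-1}]\le B/(n+1)$ together with Jensen give $\Eb[M^{-1/2}\mathbbm{1}\{N_{\mathrm{rs}}\ge1\}]=O(\sqrt{B/(\gamma n)})$, producing the first term $7B/\sqrt{\gamma\epsilon(1-\epsilon)}$ after using $B\ge1$ to replace $\sqrt{B}$ by $B$ and absorbing numerical constants into the factor $7$. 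For the small-sample term, $M<m_0$ forces $N_{\mathrm{rs}}\le\lfloor m_0/\gamma\rfloor$, and a lower-tail estimate for $\Bin(n,1/B)$ near the origin yields $\Pb[M<m_0]=O(\sqrt{\lfloor m_0/\gamma\rfloor B/n})$, the second term, while the residual $B/2$ collects a boundary correction that secures the strict inequality for all $n\ge1$, including the small-$n$ edge case where $M<m_0$ may hold almost surely.

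The step I expect to be the main obstacle is this last paragraph. The clean conditional identity reduces everything to the random calibration size, but $M$ can be as small as a constant (indeed $0$), so $\Eb[M^{-1/2}]$ must be controlled uniformly in $n$ rather than merely asymptotically; this forces the explicit separation of the degenerate regime $M<m_0$ and the use of sharp non-asymptotic binomial pmf and negative-moment bounds. Tracking the dependence on $B$, $\gamma$, and $\epsilon$ through these estimates, and in particular verifying that the (exponentially small) contribution of the small-$M$ event is dominated by the advertised $n^{-1/2}$ rate, is the delicate book-keeping that fixes the constant $C$.
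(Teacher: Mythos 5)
Your proposal is correct, and its core step is genuinely different from the paper's. The paper never computes the conditional law of $L_{P^{\pi_e}}(\hat{C}_{\tilde{\tau}})$; instead it sandwiches the event $\{L_{P^{\pi_e}}(\hat{C}_{\tilde{\tau}})\leq\epsilon\}=\{\tilde{\tau}\geq\tau^*\}$ between counting events at the deterministic threshold $\tau^*=\inf\{\tau:L_{P^{\pi_e}}(\hat{C}_\tau)\leq\epsilon\}$, using the no-ties hypothesis to pass from ``at least $k+1$ scores $\geq\tau^*$'' to ``at least $k$ scores $>\tau^*$'', which yields $\Pb[L\leq\epsilon\mid M,\Ds^{\mathrm{rs}}_1]\leq 1-F_{\Bin(M,\epsilon)}(k-1)$ and leaves a two-point-mass gap $F_{\Bin(M,\epsilon)}(k+1)-F_{\Bin(M,\epsilon)}(k-1)$ that is controlled by the Berry--Esseen inequality (this is the source of the constant $7/\sigma_\epsilon$). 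You instead use the no-ties hypothesis to conclude that the conditional score CDF $F$ is continuous, apply the probability integral transform to get the exact law $L_{P^{\pi_e}}(\hat{C}_{\tilde{\tau}})\sim\mathrm{Beta}(k+1,M-k)$ given $(M,\Ds^{\mathrm{rs}}_1)$, and convert via the Beta--Binomial identity to the exact value $1-F_{\Bin(M,\epsilon)}(k)$; the excess over $1-\delta$ is then a single point mass $\Pb[\Bin(M,\epsilon)=k+1]$, bounded by a standard binomial mode bound. Your route is sharper (it recovers Theorem \ref{thm-1} and this theorem simultaneously from one identity, and halves the point-mass contribution, so any mode constant $c_0\leq 7$ keeps the advertised $C$), at the price of invoking the order-statistics/Beta machinery; the paper's route is more elementary and generalizes more directly to the one-sided inclusion arguments it reuses in Theorem \ref{thm-bothsides}. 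The marginalization over the random calibration size is essentially identical in both proofs: isolate the degenerate regime $k=-1$ (i.e.\ $N_{\mathrm{rs}}\leq\lfloor m_0/\gamma\rfloor$) via a Hoeffding lower-tail bound for $\Bin(n,1/B)$, giving the $\sqrt{\lfloor m_0/\gamma\rfloor B}+B/2$ part of $C$, and control $\Eb[M^{-1/2}]$ on the complement via a negative-moment estimate for the binomial (the paper does this through the identity $\binom{n}{j}=\tfrac{j+1}{n+1}\binom{n+1}{j+1}$, which is the same bound you obtain from $\Eb[(N_{\mathrm{rs}}+1)^{-1}]\leq B/(n+1)$).
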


Theorems \ref{thm-1} and \ref{lem-upperbound} together imply that $\hat{C}_{\tilde{\tau}}$ attains the nominal coverage with confidence converging to $1 - \delta$, indicating its essential dependence on $\delta$. To further characterize its tightness with respect to $\epsilon$, we derive a lower bound on the probability that the coverage does not substantially exceed the nominal level.

\begin{thm}\label{thm-bothsides}
	Assume the conditions in Theorem \ref{lem-upperbound}. 
	For any $\Delta_\epsilon\in (0,\epsilon)$ and $n\geq 1$, the output  $\hat{C}_{\tilde\tau}$ of Algorithm~\ref{al-known} satisfies
	\begin{equation}\label{eq-thm-bothsides}
		\Pb\left[ \epsilon-\Delta_\epsilon < L_{P^{\pi_e}}(\hat{C}_{\tilde{\tau}}) \leq \epsilon \right] > 1-\delta - \frac{C}{\sqrt{n}}, 
	\end{equation}
	where $C=\frac{7B}{\sqrt{\gamma(\epsilon-\Delta_\epsilon)(1-\epsilon+\Delta_\epsilon)}} + \frac{(\sqrt{-2\log\delta}+1)B}{2\Delta_\epsilon\sqrt{\gamma} } + (1-\delta)(\sqrt{\lfloor m_1/\gamma \rfloor B}+\frac{B}{2}) $ and $m_1=m_0\vee \frac{\log\delta}{-2\Delta_\epsilon^2}$.
\end{thm}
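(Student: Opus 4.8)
Here is my plan. The whole argument rests on the observation that, conditional on the calibration size $M$, the random loss $L_{P^{\pi_e}}(\hat C_{\tilde\tau})$ has an explicit Beta law, after which everything reduces to binomial tail estimates averaged over $M$.

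\textbf{Step 1 (reduction to a conditional Beta law).} First I would condition on $N_{\mathrm{rs}}$, hence on $M=\lceil\gamma N_{\mathrm{rs}}\rceil$. By Proposition~\ref{pro-1} the calibration scores are i.i.d.\ draws from the test-score distribution, and the no-ties hypothesis makes that distribution continuous, so the probability integral transforms $U_i\coloneq G(\tau_i)$ (with $G$ the conditional CDF of $\tau_{n+1}$ given the training set) are i.i.d.\ $\Unif([0,1])$. Since $\tilde\tau=\tau_{(M-k)}$ with $k=k(M,\epsilon,\delta)$, exactly as in the proof of Theorem~\ref{thm-1} one gets $L_{P^{\pi_e}}(\hat C_{\tilde\tau})=1-U_{(M-k)}\sim\mathrm{Beta}(k+1,M-k)$ conditionally on $M$. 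The Beta–Binomial duality $\Pb[\mathrm{Beta}(k+1,M-k)\le p]=1-F_{\Bin(M,p)}(k)$ then yields
\[
p_M\coloneq\Pb\big[\epsilon-\Delta_\epsilon<L_{P^{\pi_e}}(\hat C_{\tilde\tau})\le\epsilon\,\big|\,M\big]=F_{\Bin(M,\epsilon-\Delta_\epsilon)}(k)-F_{\Bin(M,\epsilon)}(k).
\]
The subtracted term is $\le\delta$ by the definition of $k$ in \eqref{def-k}, so $p_M\ge F_{\Bin(M,\epsilon-\Delta_\epsilon)}(k)-\delta$. Restricting the outer expectation to $\{M\ge m_1\}$ gives the master inequality
\[
\Pb\big[\epsilon-\Delta_\epsilon<L_{P^{\pi_e}}(\hat C_{\tilde\tau})\le\epsilon\big]\ge(1-\delta)-(1-\delta)\,\Pb[M<m_1]-\Eb\big[(1-F_{\Bin(M,\epsilon-\Delta_\epsilon)}(k))\,\mathbbm{1}_{\{M\ge m_1\}}\big],
\]
and it remains to bound the two loss terms, which I expect to produce the last term and the first two terms of $C$, respectively.

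\textbf{Step 2 (the upper-tail term on $\{M\ge m_1\}$).} Here $1-F_{\Bin(M,\epsilon-\Delta_\epsilon)}(k)=\Pb[\Bin(M,\epsilon-\Delta_\epsilon)\ge k+1]$. The constraint $F_{\Bin(M,\epsilon)}(k)\le\delta$ forces a lower bound on $k$; inverting a Gaussian tail (using $\Phi^{-1}(1-\delta)\le\sqrt{-2\log\delta}$, with $\Phi$ the standard normal CDF) gives $k\ge M\epsilon-\sqrt{M\epsilon(1-\epsilon)}\,\sqrt{-2\log\delta}-1$. The threshold $m_1=m_0\vee\frac{-\log\delta}{2\Delta_\epsilon^2}$ is exactly what guarantees both $k\ge0$ and $k+1>M(\epsilon-\Delta_\epsilon)$, i.e.\ that $k$ sits strictly above the mean of $\Bin(M,\epsilon-\Delta_\epsilon)$, so this is a genuine upper-tail event. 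I would then apply the Berry–Esseen theorem to replace $F_{\Bin(M,\epsilon-\Delta_\epsilon)}(k)$ by the normal CDF at $\frac{k-M(\epsilon-\Delta_\epsilon)}{\sqrt{M(\epsilon-\Delta_\epsilon)(1-\epsilon+\Delta_\epsilon)}}$: the approximation error, of order $\big(M(\epsilon-\Delta_\epsilon)(1-\epsilon+\Delta_\epsilon)\big)^{-1/2}$, produces the $\frac{7B}{\sqrt{\gamma(\epsilon-\Delta_\epsilon)(1-\epsilon+\Delta_\epsilon)}}$ term, while bounding the residual Gaussian tail by a Mills-type inequality $1-\Phi(x)\le(x\sqrt{2\pi})^{-1}$ produces the $\frac{(\sqrt{-2\log\delta}+1)B}{2\Delta_\epsilon\sqrt\gamma}$ term, the $1/\Delta_\epsilon$ coming from the mean gap $k-M(\epsilon-\Delta_\epsilon)\gtrsim M\Delta_\epsilon$ in the denominator and the numerator $\sqrt{-2\log\delta}+1$ tracking the lower bound on $k$ (Gaussian quantile plus the integer rounding).

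\textbf{Step 3 (the small-$M$ term and conversion to $1/\sqrt n$).} Since $\Eb[w(S,A)]=1$, each acceptance probability equals $1/B$, so $N_{\mathrm{rs}}\sim\Bin(n,1/B)$ and $M=\lceil\gamma N_{\mathrm{rs}}\rceil\ge\gamma N_{\mathrm{rs}}$. A binomial lower-tail bound then gives $\Pb[M<m_1]\le\Pb[N_{\mathrm{rs}}\le\lfloor m_1/\gamma\rfloor]\le(\sqrt{\lfloor m_1/\gamma\rfloor B}+\tfrac B2)/\sqrt n$, which after the $(1-\delta)$ weight carried from Step~1 yields the last term of $C$. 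Finally, every $M$-dependent bound from Step~2 carries a $1/\sqrt M$ factor; using $M\ge\gamma N_{\mathrm{rs}}$ together with the standard estimate $\Eb[N_{\mathrm{rs}}^{-1/2}\mathbbm{1}_{\{N_{\mathrm{rs}}\ge1\}}]\le\sqrt{2B/n}$ (via Jensen applied to $\Eb[(N_{\mathrm{rs}}+1)^{-1}]\le B/(n+1)$) converts all $M$-scale quantities to the advertised $1/\sqrt n$ scale. Collecting the three contributions and using $B\ge1$ (so $\sqrt B\le B$) to absorb numerical factors gives the stated $C$, with the strict inequality coming from the strictness in the continuous Beta law.

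\textbf{Main obstacle.} The hard part is Step~2: making the Gaussian surrogate rigorous \emph{uniformly in the random $M$}. One must pin down the lower bound on $k$ sharply enough that $k$ clears the mean $M(\epsilon-\Delta_\epsilon)$ by a margin growing like $M\Delta_\epsilon$ for every $M\ge m_1$, while simultaneously keeping both the Berry–Esseen error and the Mills-ratio tail in a form whose $M$-expectation collapses cleanly to $1/\sqrt n$. The $1/\Delta_\epsilon$ blow-up as $\Delta_\epsilon\downarrow0$ is intrinsic, and the delicate bookkeeping lies in tracking the interaction between $\Delta_\epsilon$, the dependence on $\delta$ through $m_1$ and $\sqrt{-2\log\delta}$, and the shifted variance $(\epsilon-\Delta_\epsilon)(1-\epsilon+\Delta_\epsilon)$.
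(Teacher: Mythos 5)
Your proposal is correct in substance and takes a genuinely different route through the central step. The paper argues by event inclusion: it applies the one-sided bound from the proof of Theorem~\ref{lem-upperbound} at the shifted pair $(\epsilon',\delta')$ with $\epsilon'=\epsilon-\Delta_\epsilon$ and $\delta'=F_{\Bin(m,\epsilon')}(k(m,\epsilon,\delta))$, subtracts the Theorem~\ref{thm-1} bound $\Pb[L_{P^{\pi_e}}(\hat C_{\tilde\tau})>\epsilon\mid\cdot\,]\leq\delta$, and then shows $1-\delta'$ is exponentially small by Hoeffding. You instead use the exact conditional law of the loss: under the no-ties assumption the score distribution $G$ is continuous, so $L_{P^{\pi_e}}(\hat C_{\tilde\tau})=1-G(\tau_{(M-k)})\sim\mathrm{Beta}(k+1,M-k)$ given $M$, and the interval probability becomes the identity $F_{\Bin(M,\epsilon-\Delta_\epsilon)}(k)-F_{\Bin(M,\epsilon)}(k)$. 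This is cleaner and in fact tighter at this stage: the paper pays an extra $7/\sqrt{m\epsilon'(1-\epsilon')}$ for sandwiching the binomial c.d.f.\ between arguments $k-1$ and $k+1$ via Berry--Esseen, whereas your identity costs nothing, and the quantity you must then drive to one, $F_{\Bin(M,\epsilon')}(k)$, is exactly the paper's $\delta'$. Your Step~3 (splitting at $\lfloor m_1/\gamma\rfloor$, Hoeffding on the lower tail of $\Bin(n,1/B)$, and converting $\Eb[M^{-1/2}]$ into $1/\sqrt n$) coincides with the paper's marginalization.

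Two links need tightening. (i) Your lower bound on $k$ by ``inverting a Gaussian tail'' is not rigorous as stated: $k$ is a binomial quantile, and substituting the normal quantile $\Phi^{-1}(1-\delta)\le\sqrt{-2\log\delta}$ requires its own normal-approximation error, which would leave an extra term in $C$ that the statement does not contain. Use Hoeffding directly, as the paper does: $F_{\Bin(M,\epsilon)}(M\epsilon-Mt)\le e^{-2Mt^2}$ with $t=\sqrt{-\log\delta/(2M)}$ gives $k\ge M\epsilon-\sqrt{-M\log\delta/2}-1$, and a second Hoeffding bound on the upper tail of $\Bin(M,\epsilon-\Delta_\epsilon)$ then yields $1-F_{\Bin(M,\epsilon-\Delta_\epsilon)}(k)\le\exp\{-2M[\Delta_\epsilon-\sqrt{-\log\delta/(2M)}]^2\}\le\frac{\sqrt{-2\log\delta}+1}{2\Delta_\epsilon\sqrt M}$ for $M\ge m_1$; no Berry--Esseen or Mills-ratio step is needed for this piece (indeed, with your exact identity the first term of the stated $C$ becomes superfluous, so matching the theorem's constant verbatim would require reverting to the paper's accounting). (ii) The Jensen route $\Eb[N_{\mathrm{rs}}^{-1/2}\mathbbm{1}_{\{N_{\mathrm{rs}}\ge1\}}]\le\sqrt{2B/n}$ produces $\sqrt{2B}$ where the stated constant has $B$, which is larger when $B<2$; the paper's index shift $\binom{n}{j}=\frac{j+1}{n+1}\binom{n+1}{j+1}$ gives the clean $B/\sqrt n$. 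Both are bookkeeping repairs; neither affects the validity of your argument or the $1-\delta-O(1/\sqrt n)$ conclusion.
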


Intuitively, a larger $\gamma$ and a smaller $B$ lead to higher confidence, as they correspond to a larger calibration set. Moreover, pursuing high efficiency may come at the cost of the confidence guarantee, as $C$ increases with decreasing $\Delta_\epsilon$. Nevertheless, Theorem~\ref{thm-bothsides} and \ref{lem-upperbound} show that, for arbitrarily small $\Delta_\epsilon$, $\epsilon-\Delta_\epsilon < L_{P^{\pi_e}}(\hat{C}_{\tilde{\tau}})\leq\epsilon$ 
with confidence approaching the target level $1-\delta$. This is also numerically demonstrated by a simulation study shown in Figure~\ref{fig-1}.

\begin{figure}
	\centering
	\includegraphics[width=0.40\linewidth]{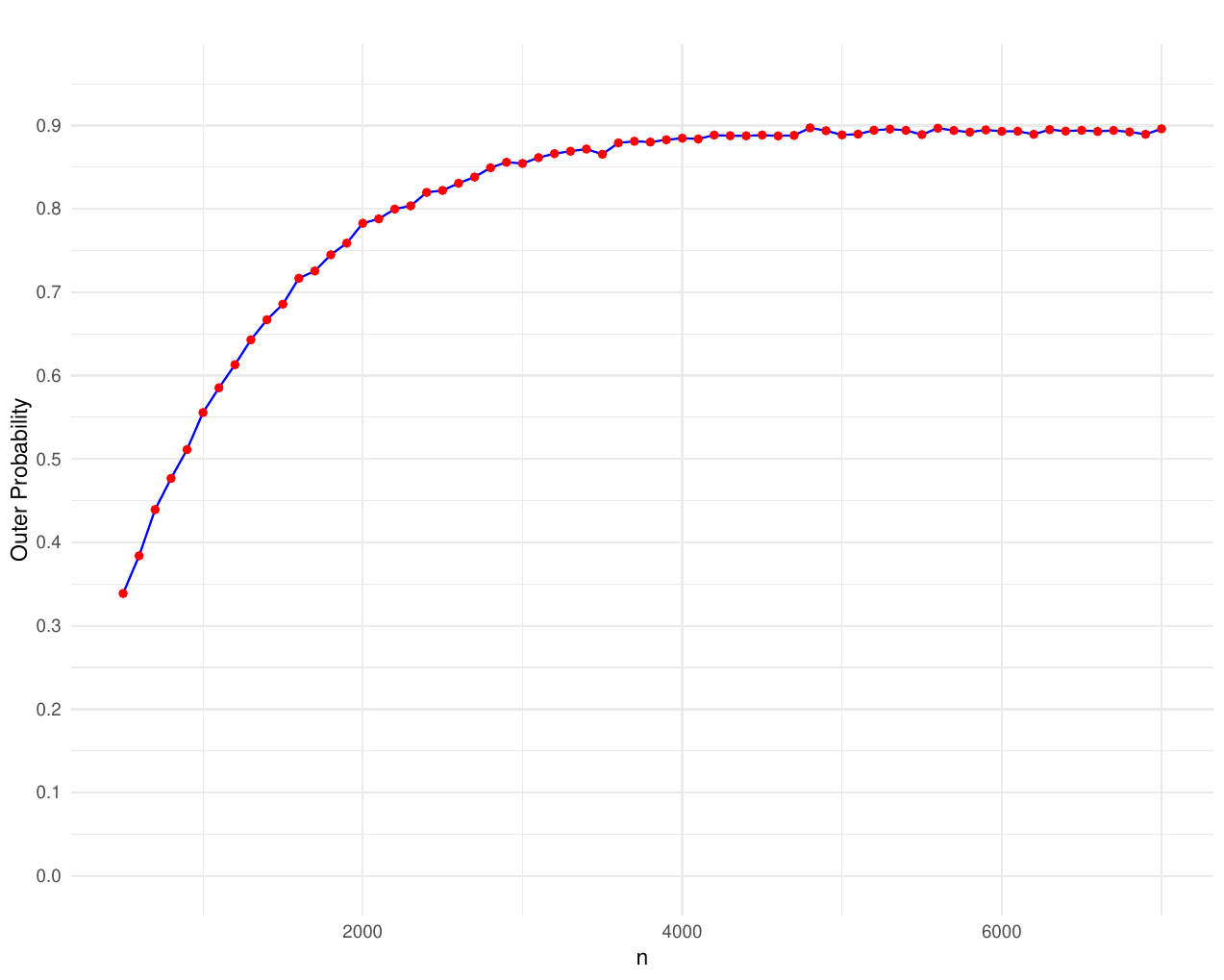}\qquad
	\includegraphics[width=0.40\linewidth]{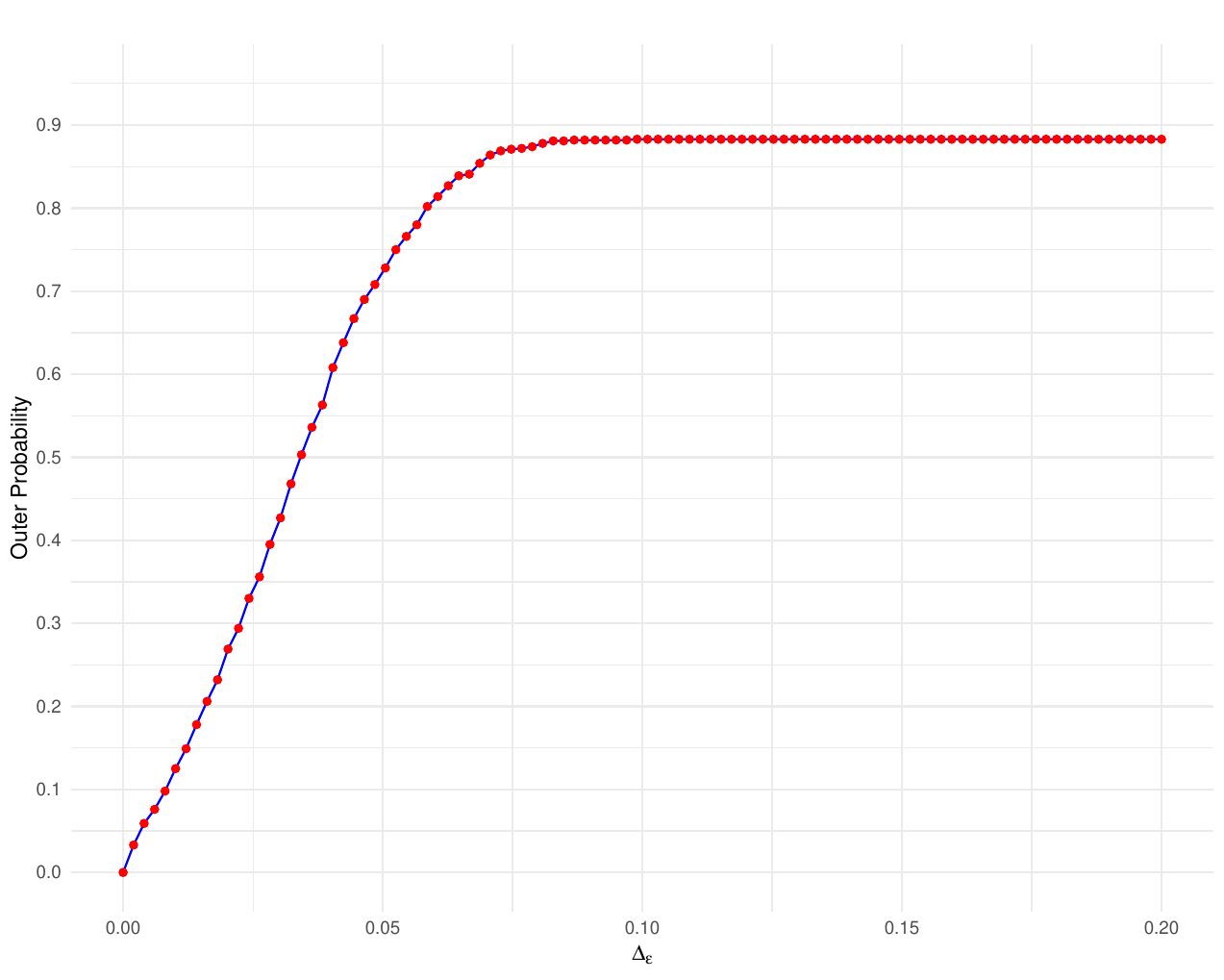}
	{\centering\caption{ Empirical $\mathbb{P}[\epsilon - \Delta_\epsilon < L_{P^{\pi_e}}(\hat{C}_{\tilde{\tau}}) \leq \epsilon]$.}\label{fig-1}}
	\caption*{\small {\bf Annotation:} The simulation was conducted under $\epsilon = 0.2$ and $\delta = 0.1$.  The left is for varying sample sizes $n$ with fixed $\Delta_\epsilon = 0.05$  and the right for varying $\Delta_\epsilon$ with fixed $n = 2000$. For each $n$, 10,000 test points were generated to evaluate $L_{P^{\pi_e}}(\hat{C}_{\tilde{\tau}})$, and 10,000 simulation runs were conducted to estimate the empirical probability.}
	
\end{figure}

In particular, by replacing $\Delta_\epsilon$ with a vanishing sequence $\Delta_n$, we can obtain the following corollary immediately.

\begin{cor}\label{cor-bothsides}
	Assume the conditions in Theorem \ref{lem-upperbound}. For a positive sequence $\Delta_n$ such that $\Delta_n=o(1)$ and $1/(\sqrt{n}\Delta_n)=o(1)$, there exists $C>0$ such that
	$$
	\Pb\left[ \epsilon-\Delta_n < L_{P^{\pi_e}}(\hat{C}_{\tilde{\tau}}) \leq \epsilon \right] > 1 - \delta - \frac{C}{\sqrt{n}\Delta_n}
	\hbox{
		and }
	\Pb\left[ \epsilon < L_{P^{\pi_e}}(\hat{C}_{\tilde{\tau}}) \leq \epsilon + \Delta_n \right] > \delta - \frac{C}{\sqrt{n}\Delta_n}.
	$$
\end{cor}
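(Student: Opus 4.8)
The plan is to obtain both inequalities by specializing the finite-sample bounds of Theorems \ref{lem-upperbound} and \ref{thm-bothsides} to the shrinking gap $\Delta_\epsilon = \Delta_n$ and then tracking how the accompanying constant degrades as $\Delta_n \to 0$. The whole argument is bookkeeping around two already-proven estimates, plus one upper-tail mirror image of the computation inside Theorem \ref{thm-bothsides}.

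For the first inequality I would substitute $\Delta_\epsilon = \Delta_n$ directly into Theorem \ref{thm-bothsides}, which is legitimate for every $n$ large enough that $\Delta_n < \epsilon$. The only work is to show that the $\Delta_n$-dependent constant, call it $C(\Delta_n)$, satisfies $C(\Delta_n) \le \tilde C/\Delta_n$ for a fixed $\tilde C$. Inspecting its three summands: the first term $\frac{7B}{\sqrt{\gamma(\epsilon-\Delta_n)(1-\epsilon+\Delta_n)}}$ converges to the finite constant $\frac{7B}{\sqrt{\gamma\epsilon(1-\epsilon)}}$ and is thus $O(1)=O(1/\Delta_n)$; the second term is exactly proportional to $1/\Delta_n$; and since $m_1 = m_0 \vee \frac{\log\delta}{-2\Delta_n^2}$ behaves like $\frac{-\log\delta}{2\Delta_n^2}$ for large $n$, the third term grows like $\sqrt{\lfloor m_1/\gamma\rfloor B}=O(1/\Delta_n)$. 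Hence $C(\Delta_n)/\sqrt n \le \tilde C/(\sqrt n\Delta_n)$, which is the claimed error. The finitely many small $n$ (where $\Delta_n \ge \epsilon$, so Theorem \ref{thm-bothsides} does not apply, or the asymptotics have not kicked in) are absorbed by enlarging $\tilde C$ until the right-hand side is negative and the inequality holds vacuously, using $\Pb[\cdot]\ge 0$.

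For the second inequality I would decompose $\Pb[\epsilon < L_{P^{\pi_e}}(\hat C_{\tilde\tau}) \le \epsilon+\Delta_n] = \Pb[L_{P^{\pi_e}}(\hat C_{\tilde\tau}) > \epsilon] - \Pb[L_{P^{\pi_e}}(\hat C_{\tilde\tau}) > \epsilon + \Delta_n]$. The first term is bounded below by $\delta - C_1/\sqrt n$ directly from Theorem \ref{lem-upperbound} (rewriting $\Pb[L_{P^{\pi_e}}(\hat C_{\tilde\tau}) \le \epsilon] < 1 - \delta + C_1/\sqrt n$, with $C_1$ the fixed constant there). For the second term I would show the far right tail is negligible: conditionally on the calibration size $M$, the probability-integral transform of $\tilde\tau$ is the $(M-k)$-th order statistic of $M$ uniforms, so $L_{P^{\pi_e}}(\hat C_{\tilde\tau})$ has a $\mathrm{Beta}(k+1,M-k)$ law and $\Pb[L_{P^{\pi_e}}(\hat C_{\tilde\tau}) > \epsilon+\Delta_n \mid M] = F_{\Bin(M,\epsilon+\Delta_n)}(k)$. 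Because $k=k(M,\epsilon,\delta)$ sits below $M\epsilon$, it lies about $M\Delta_n$ below the shifted mean $M(\epsilon+\Delta_n)$, and a Hoeffding bound yields $\exp(-2M\Delta_n^2)$-type decay — the exact upper-tail analogue of the left-tail estimate used to prove Theorem \ref{thm-bothsides}. Taking expectation over $M$ and treating the small-$M$ event exactly as there, this term is of smaller order than $1/(\sqrt n\Delta_n)$. Combining the two pieces and using $1/\sqrt n \le 1/(\sqrt n\Delta_n)$ gives $\Pb[\epsilon < L_{P^{\pi_e}}(\hat C_{\tilde\tau}) \le \epsilon+\Delta_n] > \delta - C/(\sqrt n\Delta_n)$.

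The main obstacle is not conceptual but the uniform-in-$n$ bookkeeping: for the first inequality it is verifying that $C(\Delta_n)=O(1/\Delta_n)$ and disposing of small $n$ by vacuity, and for the second it is establishing the upper-tail estimate on $\Pb[L_{P^{\pi_e}}(\hat C_{\tilde\tau}) > \epsilon+\Delta_n]$ with the correct $\Delta_n$-dependence. This is precisely where the standing hypothesis $1/(\sqrt n\Delta_n)=o(1)$, equivalently $\sqrt n\,\Delta_n \to \infty$, is indispensable: it guarantees that the $\delta$-quantile $k$ is separated from the shifted mean $M(\epsilon+\Delta_n)$ by a margin that dwarfs the binomial standard deviation $\sqrt{M\epsilon(1-\epsilon)}$, so that the Hoeffding exponent genuinely governs the tail and the remainder is dominated by $C/(\sqrt n\Delta_n)$.
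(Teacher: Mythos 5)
Your proposal is correct and follows the route the paper intends: the paper offers no separate proof of Corollary~\ref{cor-bothsides}, asserting it follows ``immediately'' by setting $\Delta_\epsilon=\Delta_n$ in Theorem~\ref{thm-bothsides}, and your bookkeeping that the constant there is $O(1/\Delta_n)$ (bounded first term, explicit $1/\Delta_n$ in the second, $\sqrt{m_1}\asymp 1/\Delta_n$ in the third), together with absorbing small $n$ by vacuity, is exactly what that requires. For the second inequality you correctly recognize that it is \emph{not} literally immediate from the theorem statements and supply the missing piece --- the upper-tail Hoeffding bound $F_{\Bin(M,\epsilon+\Delta_n)}(k(M,\epsilon,\delta))\leq \exp(-2M\Delta_n^2(1+o(1)))$, mirroring the $\delta'\to 1$ estimate inside the proof of Theorem~\ref{thm-bothsides}, with $\sqrt{n}\Delta_n\to\infty$ ensuring the quantile $k$ stays far enough below $M(\epsilon+\Delta_n)$ --- which is precisely the argument the paper leaves implicit.
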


Corollary \ref{cor-bothsides} shows that $\hat{C}_{\tilde{\tau}}$ achieves exact $1-\epsilon$ coverage with probability tending to $1$. However, PIs satisfying exact $1-\epsilon$ marginal coverage are not unique, and an ideal interval should take the form
\begin{equation}\label{oracle-PI}
	C^{\mathrm{oracle}}(S_{n+1})=[q_{\epsilon_{\mathrm{lo}}}(S_{n+1}),q_{\epsilon_{\mathrm{up}}}(S_{n+1})  ],
\end{equation}
where ${q}_{\epsilon_{\mathrm{lo}}}$ and ${q}_{\epsilon_{\mathrm{up}}}$ denote the oracle conditional quantiles of $R$ given $S$ under the target policy, because it further guarantees object conditional validity with user-specified miscoverage allocation. In what follows, we establish the asymptotic equivalence between $\hat{C}_{\tilde{\tau}}$ and $C^{\mathrm{oracle}}$ under certain additional assumptions.

\begin{thm}\label{thm-asymptotic}
	Assume the consistency of quantile estimates and a regularity condition (see Appendix \ref{Proof of Theorem thm-asymptotic} for details). The output $\hat{C}_{\tilde\tau}$ of Algorithm~\ref{al-known} satisfies 
	$$
	\mathcal{L}\left( \hat{C}_{\tilde{\tau}}(S_{n+1}) \Delta C^{\mathrm{oracle}}(S_{n+1}) \right) = o_\Pb(1),	\hbox{ as }n\to \infty,
	$$
	where $\mathcal{L}(\cdot)$ denotes the Lebesgue measure and $\Delta$ is the symmetric difference operator.	
\end{thm}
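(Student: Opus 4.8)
The plan is to reduce the Lebesgue measure of the symmetric difference to the three endpoint discrepancies and then show each is $o_\Pb(1)$. Writing $\hat{C}_{\tilde{\tau}}(S_{n+1})=[\hat{q}_{\epsilon_{\mathrm{lo}}}(S_{n+1})-\tilde{\tau},\,\hat{q}_{\epsilon_{\mathrm{up}}}(S_{n+1})+\tilde{\tau}]$ and $C^{\mathrm{oracle}}(S_{n+1})=[q_{\epsilon_{\mathrm{lo}}}(S_{n+1}),\,q_{\epsilon_{\mathrm{up}}}(S_{n+1})]$, I would first record the elementary fact that for any two intervals $[a,b]$ and $[c,d]$ one has $\mathcal{L}([a,b]\,\Delta\,[c,d])\le|a-c|+|b-d|$, obtained by bounding $\int|\mathbbm{1}_{[a,b]}-\mathbbm{1}_{[c,d]}|$ through the one-sided indicators. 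Applying this gives
\[
\mathcal{L}\!\left(\hat{C}_{\tilde{\tau}}(S_{n+1})\,\Delta\,C^{\mathrm{oracle}}(S_{n+1})\right)
\le \bigl|\hat{q}_{\epsilon_{\mathrm{lo}}}(S_{n+1})-q_{\epsilon_{\mathrm{lo}}}(S_{n+1})\bigr|
+\bigl|\hat{q}_{\epsilon_{\mathrm{up}}}(S_{n+1})-q_{\epsilon_{\mathrm{up}}}(S_{n+1})\bigr|+2\tilde{\tau}.
\]
The consistency assumption controls the first two terms, so the whole theorem reduces to the single claim $\tilde{\tau}=o_\Pb(1)$.

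Second, I would pin down the limiting rank and the limiting score level. The RS acceptance probability equals $\E[\,B^{-1}w(S,A)\,]=B^{-1}$, since $\E[w(S,A)\mid S]=\int\pi_e(a\mid S)\,\drm a=1$; hence $N_{\mathrm{rs}}\sim\Bin(n,1/B)$, so $N_{\mathrm{rs}}\to\infty$ and $M=\lceil\gamma N_{\mathrm{rs}}\rceil\to\infty$ almost surely. A standard binomial-quantile estimate (the same CLT/Hoeffding bounds behind Theorems \ref{lem-upperbound} and \ref{thm-bothsides}) gives $k(M,\epsilon,\delta)/M\to\epsilon$, so the normalized rank satisfies $(M-k(M,\epsilon,\delta))/M\to 1-\epsilon$. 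On the population side, the oracle score $\tau^{*}:=\max\{q_{\epsilon_{\mathrm{lo}}}(S)-R,\,R-q_{\epsilon_{\mathrm{up}}}(S)\}$ satisfies $\tau^{*}\le 0\iff R\in[q_{\epsilon_{\mathrm{lo}}}(S),q_{\epsilon_{\mathrm{up}}}(S)]$, an event of probability $1-(\epsilon_{\mathrm{up}}-\epsilon_{\mathrm{lo}})=1-\epsilon$ under $P^{\pi_e}$; thus $0$ is the $(1-\epsilon)$-quantile of the law $G^{*}$ of $\tau^{*}$. Matching the rank to this level, $\tilde{\tau}=\tau_{(M-k(M,\epsilon,\delta))}$ should converge to $(G^{*})^{-1}(1-\epsilon)=0$.

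Third, making this rigorous is the main obstacle, because $\tilde{\tau}$ is an empirical quantile of the calibration scores $\tau_i=\max\{\hat{q}_{\epsilon_{\mathrm{lo}}}(S_i)-R_i,\,R_i-\hat{q}_{\epsilon_{\mathrm{up}}}(S_i)\}$ computed from the \emph{estimated} quantile functions, so the sampling randomness and the estimation error must be handled simultaneously. I would condition on the train/calibration split: given $\Ds^{\mathrm{rs}}_1$ the maps $\hat{q}_{\epsilon_{\mathrm{lo}}},\hat{q}_{\epsilon_{\mathrm{up}}}$ are fixed and, by Proposition \ref{pro-1}, the calibration scores are i.i.d.\ draws from the law $G_{\hat q}$ of $\max\{\hat{q}_{\epsilon_{\mathrm{lo}}}(S)-R,\,R-\hat{q}_{\epsilon_{\mathrm{up}}}(S)\}$ under $P^{\pi_e}$. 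Glivenko-Cantelli gives $\sup_t|G_{n}(t)-G_{\hat q}(t)|\to 0$ for the empirical CDF $G_n$ of the $\tau_i$, while the consistency of $\hat{q}_{\epsilon_{\mathrm{lo}}},\hat{q}_{\epsilon_{\mathrm{up}}}$ together with the regularity condition yields $G_{\hat q}\to G^{*}$; combined with the continuity and positive density of $G^{*}$ at its $(1-\epsilon)$-quantile $0$ (supplied by the regularity condition), the empirical-quantile map is continuous there, so $\tilde{\tau}\to 0$ in probability. The delicate point is that $M$, $k(M,\epsilon,\delta)$ and $\hat q$ are all random and coupled, so I would phrase the final step through a subsequence/almost-sure argument or a uniform-in-$M$ DKW bound to pass from the conditional convergence to the unconditional one; once $\tilde{\tau}=o_\Pb(1)$ is secured, the displayed bound closes the proof.
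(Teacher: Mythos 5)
Your proposal is correct in outline and shares the paper's skeleton --- reduce the symmetric difference via the triangle inequality to the two quantile-estimation errors plus $|\tilde{\tau}|$, handle the former with the consistency assumption and Markov's inequality, and then devote the real work to showing $\tilde{\tau}=o_\Pb(1)$ by matching the rank $(M-k(M,\epsilon,\delta))/M\to 1-\epsilon$ against the fact that $0$ is the $(1-\epsilon)$-quantile of the oracle score $T=\max\{q_{\epsilon_{\mathrm{lo}}}(S)-R,\,R-q_{\epsilon_{\mathrm{up}}}(S)\}$. Where you diverge is in the execution of that last step: the paper never passes through the conditional score law $G_{\hat q}$ or Glivenko--Cantelli; instead it partitions the calibration set into contexts where $|\hat q - q|\leq \eta_l^{1/3}$ and the rest, shows the bad part has size $o_\Pb(M)$ by Hoeffding, deduces $|\tau_i-T_i|\leq\eta_l^{1/3}$ on the good part, and sandwiches $\tilde{\tau}$ between two order statistics of the i.i.d.\ oracle scores $T_i$, whose convergence to $0$ follows from a direct binomial tail bound using $\Pb(T_i>\alpha)<\epsilon<\Pb(T_i>-\alpha)$ under the regularity assumption. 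Your empirical-process route buys a cleaner conceptual statement (empirical quantile of $G_{\hat q}$ at a level converging to $1-\epsilon$, plus continuity of the quantile map at $0$), but it is exactly at the point you flag as ``delicate'' that the paper's sandwich does the work: establishing $G_{\hat q}\to G^{*}$ near $0$ in probability already requires the same good-set/bad-set decomposition, so the paper's argument is arguably the more economical way to discharge the coupling between $M$, $k$, and $\hat q$. Two small corrections: the symmetric-difference bound should read $2|\tilde{\tau}|$ rather than $2\tilde{\tau}$, since the score $\tilde{\tau}$ can be negative (in which case $\hat C_{\tilde\tau}$ shrinks the quantile band and your displayed inequality would not be an upper bound as written); and $\Pb\bigl[R\in[q_{\epsilon_{\mathrm{lo}}}(S),q_{\epsilon_{\mathrm{up}}}(S)]\bigr]=\epsilon_{\mathrm{up}}-\epsilon_{\mathrm{lo}}=1-\epsilon$, not $1-(\epsilon_{\mathrm{up}}-\epsilon_{\mathrm{lo}})$ --- your conclusion is right but the intermediate expression evaluates to $\epsilon$.
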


\subsection{PACOPP with Unknown $\pi_b$}

The procedure above is based on a known behavior policy $\pi_b$, thereby granting access to the oracle weight function $w$. In many real-world applications, $\pi_b$ is unknown and thus must be estimated. Suppose an estimator $\hat{\pi}_b$ is learned from a subset of the observational data, say $\Ds_1$. Then the policy ratio $w$ is correspondingly estimated by $\hat{w}(s,a)=\hat{w}(s,a;\Ds_1):=\frac{\pi_e}{\hat\pi_b} (s,a)$. With the estimate $\hat{w}(s,a)$, we can follow Algorithm \ref{al-known} to generate a corresponding PI. 

This procedure is formally presented in Algorithm \ref{al-1}. The estimation of the weights inevitably leads to a degradation in coverage, as stated in Theorem~\ref{thm-est-pib}  (a proof deferred to \ref{pf-thm-est-pib}), which serves as a counterpart to Theorems \ref{thm-1}--\ref{thm-bothsides}. To present the theoretical guarantee, define 
\begin{equation}\label{def-w-error}
	\Delta_w \coloneq \Eb_{\Ds_1}  \left| \hat{w}(S,A) - w(S,A) \right|
\end{equation}
where, and thereafter, to symbolically simplify the mathematical formulae, we abuse $\Eb_{\Ds_1}[\cdot]$ to denote $\Eb[\,\cdot\mid\Ds_1]$ with $(S,A)$ independent of $\Ds_1$ and following $P_S \times \pi_b$.
\begin{algorithm}    
	\caption{PAC Off-Policy Prediction with unknown behavior policy} 
	\label{al-1} 
	\begin{algorithmic}[1] 
		\Statex \textbf{Input:} Observational data $\Ds=\{S_i, A_i, R_i\}_{i=1}^n$; test context $S_{n+1}$; PAC parameters $(\epsilon,\delta)\in(0,1)^2$; target policy $\pi_e$; policy estimation algorithm $\mathcal{A}_\mathrm{po}$; quantile prediction algorithm $\mathcal{A}_\mathrm{qu}$; quantile levels $\epsilon_{\mathrm{lo}},\epsilon_{\mathrm{up}}$ with $\epsilon_{\mathrm{up}}-\epsilon_{\mathrm{lo}}=1-\epsilon$; calibration ratio $\gamma$; function \textsc{Rs}, \textsc{Pac-Cp} in Algorithm~\ref{al-known}
		\Statex \textbf{Output:} Prediction interval $\hat{C}_{\tilde{\tau}}(S_{n+1})$
		\State \quad Split $\Ds$ into $\Ds_1\cup\Ds_2$ with $|\Ds_2| =\lceil\gamma n \rceil $ 
		\State \quad Estimate $\hat{\pi}_b$ using $\mathcal{A}_\mathrm{po}$ on $\Ds_1$
		\State \quad $\Ds^{\mathrm{rs}}_i \gets \textsc{Rs} (\Ds_i, \hat{w}(s,a) = \frac{\pi_e}{\hat{\pi}_b}( a|s))$ for $i=1,2$
		\State \quad Train $\hat{q}_{\epsilon_{\mathrm{lo}}}, \hat{q}_{\epsilon_{\mathrm{up}}}$ using $\mathcal{A}_\mathrm{qu}$ on $\Ds^{\mathrm{rs}}_1$
		\State \quad \textbf{return} $\hat{C}_{\tilde{\tau}}(S_{n+1})$ = \textsc{Pac-Cp}($\Ds^{\mathrm{rs}}_2, S_{n+1},  \epsilon, \delta, \hat{q}_{\epsilon_{\mathrm{lo}}}, \hat{q}_{\epsilon_{\mathrm{up}}}$)
	\end{algorithmic}
\end{algorithm}

\begin{thm}\label{thm-est-pib}
	If  $\sup_{(s,a)\in\Ss\times\As}\hat{w}(s,a)<\infty\;a.s.$,
	then the output $\hat{C}_{\tilde{\tau}}$ of Algorithm \ref{al-1} satisfies
	\begin{equation}\label{thm-est-pib-eq1}
		\Pb\left[ L_{P^{\pi_e}}(\hat{C}_{\tilde{\tau}})\leq\epsilon+ \Delta_w \right]\geq 1-\delta.
	\end{equation}
	In addition, if all $\tau_i$ have no ties almost surely, then for any $\Delta_\epsilon\in (0,\epsilon )$,
	\begin{equation}\label{thm-est-pib-eq2}
		\Pb\left[ \epsilon- \Delta_w-\Delta_\epsilon<L_{P^{\pi_e}}(\hat{C}_{\tilde{\tau}})\leq\epsilon + \Delta_w \right]\geq 1-\delta-\frac{C}{\sqrt{n}},
	\end{equation}
	for some positive constant $C$.
\end{thm}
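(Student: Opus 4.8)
The plan is to condition on the first split $\Ds_1$ (together with the uniforms used in the RS step on $\Ds_1$) and thereby reduce Algorithm~\ref{al-1} to Algorithm~\ref{al-known} run against a \emph{shifted} target law, paying for the shift afterward by a change of measure. Given $\Ds_1$, both $\hat w$ and the fitted quantile maps $\hat q_{\epsilon_{\mathrm{lo}}},\hat q_{\epsilon_{\mathrm{up}}}$ are fixed deterministic functions; and since $\sup_{(s,a)}\hat w(s,a)<\infty$ a.s., $\hat w$ is a bounded nonnegative weight, so the thinning argument behind Proposition~\ref{pro-1} shows that the RS-selected calibration points $\Ds^{\mathrm{rs}}_2$ are, given their number, i.i.d. from the self-normalized reweighted law
\[
Q(s,r)\coloneq \frac{1}{\hat W}\,P_S(s)\int_\As \pi_b(a\mid s)\,\hat w(s,a)\,P_R(r\mid s,a)\,\drm a,\qquad \hat W\coloneq \Eb_{P_S\times\pi_b}[\hat w(S,A)].
\]
Conditional on $\Ds_1$ the procedure therefore coincides with Algorithm~\ref{al-known} with $Q$ in place of $P^{\pi_e}$: the scores are computed on i.i.d. $Q$-data with a frozen $\hat q$. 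Reading Theorems~\ref{thm-1} and~\ref{thm-bothsides} with a test point drawn from the calibration law $Q$ then gives, writing $L_Q(\hat C)\coloneq\Pb[R\notin\hat C(S)\mid\Ds]$ for $(S,R)\sim Q$ independent of $\Ds$,
\[
\Pb[L_Q(\hat C_{\tilde\tau})\le\epsilon\mid\Ds_1]\ge 1-\delta,\qquad \Pb[\epsilon-\Delta_\epsilon<L_Q(\hat C_{\tilde\tau})\le\epsilon\mid\Ds_1]\ge 1-\delta-\tfrac{C}{\sqrt n}.
\]

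The core estimate is the bridge between $L_{P^{\pi_e}}$ and $L_Q$. Writing both miscoverages as behavior-policy expectations reweighted by $w=\pi_e/\pi_b$ and by $\hat w/\hat W$, which are \emph{both} mean one under $P_S\times\pi_b$, I obtain for any $\hat C$ that is fixed given $\Ds_1$ that
\[
L_{P^{\pi_e}}(\hat C)-L_Q(\hat C)=\Eb_{P_S\times\pi_b}\!\left[\Big(w(S,A)-\tfrac{\hat w(S,A)}{\hat W}\Big)\,\mathbbm{1}[R\notin \hat C(S)]\right].
\]
Because the weight difference $g\coloneq w-\hat w/\hat W$ integrates to zero, one has $|\Eb[g\,\mathbbm{1}]|=|\Eb[g(\mathbbm{1}-\tfrac12)]|\le \tfrac12\Eb|g|$, and the triangle inequality together with $|\hat W-1|=|\Eb(\hat w-w)|\le\Delta_w$ gives $\Eb|g|\le \Delta_w+|\hat W-1|\le 2\Delta_w$. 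Combining, $|L_{P^{\pi_e}}(\hat C)-L_Q(\hat C)|\le\Delta_w$. The mean-one cancellation producing the factor $\tfrac12$ is exactly what makes the bound come out to $\Delta_w$ rather than $2\Delta_w$; this is the step I expect to require the most care.

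With the bridge in hand the two claims assemble quickly. Since $\Delta_w$ is $\Ds_1$-measurable, the event $\{L_Q(\hat C_{\tilde\tau})\le\epsilon\}$ is contained, conditional on $\Ds_1$, in $\{L_{P^{\pi_e}}(\hat C_{\tilde\tau})\le\epsilon+\Delta_w\}$, whence $\Pb[L_{P^{\pi_e}}(\hat C_{\tilde\tau})\le\epsilon+\Delta_w\mid\Ds_1]\ge 1-\delta$; taking $\Eb_{\Ds_1}$ yields \eqref{thm-est-pib-eq1}. For \eqref{thm-est-pib-eq2}, on the conditional event $\{\epsilon-\Delta_\epsilon<L_Q(\hat C_{\tilde\tau})\le\epsilon\}$ the bridge gives simultaneously $L_{P^{\pi_e}}(\hat C_{\tilde\tau})\le L_Q+\Delta_w\le\epsilon+\Delta_w$ and $L_{P^{\pi_e}}(\hat C_{\tilde\tau})\ge L_Q-\Delta_w>\epsilon-\Delta_\epsilon-\Delta_w$, so this event lies inside $\{\epsilon-\Delta_w-\Delta_\epsilon<L_{P^{\pi_e}}(\hat C_{\tilde\tau})\le\epsilon+\Delta_w\}$; applying the second conditional bound and integrating over $\Ds_1$ delivers the stated estimate.

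The remaining obstacle is bookkeeping around the conditioning and the now-random constant. I must verify carefully that data splitting makes $\hat w$ and $\hat q$ measurable with respect to $\Ds_1$ while $\Ds^{\mathrm{rs}}_2$ and the test point are generated afresh from $Q$, so that the split-conformal independence required by Theorems~\ref{thm-1}--\ref{thm-bothsides} genuinely holds conditionally. A more subtle point is that the constant $C$ inherited from Theorem~\ref{thm-bothsides} depends on $\hat B=\sup_{(s,a)}\hat w(s,a)$, which is itself random through $\Ds_1$; passing from the conditional bound to the unconditional \eqref{thm-est-pib-eq2} therefore requires mild integrability of $\hat B$ (e.g. a bounded moment) so that $\Eb_{\Ds_1}[C(\hat B)]/\sqrt n$ stays $O(1/\sqrt n)$. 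I expect the mean-one change-of-measure step and this control of $\hat B$ to be the two places where the argument must be handled with genuine care.
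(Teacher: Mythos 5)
Your proposal is correct and follows essentially the same route as the paper: condition on $\Ds_1$, observe that the RS-selected calibration data are i.i.d.\ from the self-normalized law $\hat P$ (your $Q$), apply Theorems~\ref{thm-1} and~\ref{thm-bothsides} conditionally, and then bridge $L_{\hat P}$ and $L_{P^{\pi_e}}$ by a bound of exactly $\Delta_w$ --- your mean-zero cancellation giving the factor $\tfrac12$ is precisely the paper's total-variation inequality $|L_{\hat P}(\hat C)-L_{P^{\pi_e}}(\hat C)|\le d_{\mathrm{TV}}(\hat P,P^{\pi_e})\le\Delta_w$, proved via the same triangle-inequality decomposition. Your closing remark about the randomness of the constant inherited from $\sup_{(s,a)}\hat w(s,a)$ in \eqref{thm-est-pib-eq2} is a fair point that the paper's proof (which omits this part as ``similar'') does not address explicitly.
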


Note that if we define the error as $\Delta_w = \Eb_{\Ds_1}  \left| \frac{\hat{w}(S,A)}{\Eb_{\Ds_1}[\hat{w}(S,A)]} - w(S,A) \right|$ instead of \eqref{def-w-error}, then the coverage degradation would be $\Delta_w/2$. It coincides with the marginal coverage loss observed by \cite{lei2021conformal} and \cite{taufiq2022conformal} for weighted CP method with estimated weight function. 

Generally, $\pi_b$ is estimated using certain learning algorithms and, thus, theoretical guarantees for bounding the error term (\ref{def-w-error}) remain unknown. We would like to note that, if the MLE approach is employed to estimate $\pi_b$, Theorem \ref{thm-est-pib} can be adapted to the following Theorem \ref{thm-MLE} that provides an explicit coverage loss under suitable conditions (proof in Appendix~\ref{pf-thm-MLE}) and thus shows that Algorithm \ref{al-1} achieves the required PAC validity asymptotically. 

\begin{thm}\label{thm-MLE}
	Assume access to a finite policy class $\Pi=\{\pi:\Ss\to\Delta(\As)\}$, such that $\pi_b\in\Pi$ and $\sup_{(s,a)\in\Ss\times\As}\frac{\pi_e}{\pi}(a|s)\leq B_{\Pi}<\infty$ for all $\pi\in\Pi$. If  the behavior policy is estimated by $
	\hat{\pi}_b=\underset{\pi\in\Pi}{\arg\max}\sum_{(S_i,A_i,R_i)\in\Ds_1}\log \pi(A_i,S_i),
	$
	then, for any $\delta_0\in(0,1)$, the output $\hat{C}_{\tilde{\tau}}$ of Algorithm~\ref{al-1} satisfies
	$$
	\Pb\left[ L_{P^{\pi_e}}(\hat{C}_{\tilde{\tau}})\leq\epsilon+  2B_{\Pi}\sqrt{\frac{2\log(|\Pi|/\delta_0)}{\lfloor(1-\gamma)n\rfloor}} \right]\geq (1-\delta_0)(1-\delta).
	$$
\end{thm}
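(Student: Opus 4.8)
The plan is to treat Theorem~\ref{thm-MLE} as an instantiation of Theorem~\ref{thm-est-pib} in which the abstract estimation error $\Delta_w$ is controlled, with high probability, by the finite-class MLE rate. Since $\hat\pi_b\in\Pi$ we have $\hat w(s,a)=\frac{\pi_e}{\hat\pi_b}(a|s)\le B_\Pi<\infty$, so the hypothesis $\sup\hat w<\infty$ of Theorem~\ref{thm-est-pib} is met. Moreover, the argument underlying Theorem~\ref{thm-est-pib} in fact delivers the conditional guarantee $\Pb[L_{P^{\pi_e}}(\hat C_{\tilde\tau})\le\epsilon+\Delta_w\mid\Ds_1]\ge 1-\delta$ for every realization of the training split $\Ds_1$, on which $\hat\pi_b$, and hence $\Delta_w$, is measurable. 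Writing $N=|\Ds_1|=\lfloor(1-\gamma)n\rfloor$ and $b\coloneq 2B_\Pi\sqrt{2\log(|\Pi|/\delta_0)/N}$, it therefore suffices to exhibit a $\Ds_1$-measurable event $E=\{\Delta_w\le b\}$ with $\Pb(E)\ge 1-\delta_0$ and then to combine the two bounds.

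First I would reduce $\Delta_w$ to an expected total-variation distance between the estimated and true behavior policies. Since $|\hat w-w|=\pi_e(a|s)\,\bigl|\tfrac1{\hat\pi_b}-\tfrac1{\pi_b}\bigr|(a|s)$ and the sampling law of $(S,A)$ is $P_S\times\pi_b$, the factor $\pi_b$ cancels and $\pi_e/\hat\pi_b=\hat w\le B_\Pi$, giving
\[
\Delta_w=\Eb_S\!\int_\As \hat w(S,a)\,\bigl|\pi_b(a|S)-\hat\pi_b(a|S)\bigr|\,\drm a\le 2B_\Pi\,\Eb_S\bigl[\mathrm{TV}\bigl(\hat\pi_b(\cdot|S),\pi_b(\cdot|S)\bigr)\bigr].
\]
Thus it is enough to bound the expected TV distance by $\sqrt{2\log(|\Pi|/\delta_0)/N}$ on an event of probability at least $1-\delta_0$.

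The crux, and the main obstacle, is a finite-class MLE bound in squared Hellinger distance with sharp constant. For each $\pi\in\Pi$, using independence and the identity $\Eb_{S,A}[\sqrt{\pi/\pi_b}]=1-\bar{\mathrm H}^2(\pi)$ with $(S,A)\sim P_S\times\pi_b$ and $\bar{\mathrm H}^2(\pi)\coloneq\Eb_S[\mathrm H^2(\pi(\cdot|S),\pi_b(\cdot|S))]$, I would compute $\Eb\bigl[\exp\bigl(\tfrac12\sum_{i}\log\tfrac{\pi(A_i|S_i)}{\pi_b(A_i|S_i)}\bigr)\bigr]=(1-\bar{\mathrm H}^2(\pi))^N\le e^{-N\bar{\mathrm H}^2(\pi)}$. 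A Markov bound at level $e^{\log(|\Pi|/\delta_0)-N\bar{\mathrm H}^2(\pi)}$ followed by a union bound over the finite $\Pi$ shows that, with probability at least $1-\delta_0$, $\tfrac12\sum_i\log\tfrac{\pi}{\pi_b}+N\bar{\mathrm H}^2(\pi)<\log(|\Pi|/\delta_0)$ simultaneously for all $\pi$. Evaluating at $\pi=\hat\pi_b$ and invoking MLE optimality $\sum_i\log\tfrac{\hat\pi_b}{\pi_b}(A_i|S_i)\ge 0$ yields $\bar{\mathrm H}^2(\hat\pi_b)<\log(|\Pi|/\delta_0)/N$. Finally $\mathrm{TV}\le\sqrt2\,\mathrm H$ together with Jensen gives $\Eb_S[\mathrm{TV}]\le\sqrt{2\,\bar{\mathrm H}^2(\hat\pi_b)}<\sqrt{2\log(|\Pi|/\delta_0)/N}$, so $\Delta_w<b$ on this event; this is where the constants must be tracked carefully to land exactly on $b$.

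It remains to merge the two guarantees. The event $E=\{\Delta_w\le b\}$ is $\Ds_1$-measurable with $\Pb(E)\ge 1-\delta_0$. On $E$ we have $\{L_{P^{\pi_e}}(\hat C_{\tilde\tau})\le\epsilon+\Delta_w\}\subseteq\{L_{P^{\pi_e}}(\hat C_{\tilde\tau})\le\epsilon+b\}$, so the conditional guarantee gives $\Pb[L_{P^{\pi_e}}(\hat C_{\tilde\tau})\le\epsilon+b\mid\Ds_1]\ge 1-\delta$ whenever $\Ds_1\in E$. Hence
\[
\Pb\bigl[L_{P^{\pi_e}}(\hat C_{\tilde\tau})\le\epsilon+b\bigr]\ge\Eb\bigl[\mathbbm{1}_E\,\Pb(L_{P^{\pi_e}}(\hat C_{\tilde\tau})\le\epsilon+b\mid\Ds_1)\bigr]\ge(1-\delta)\,\Pb(E)\ge(1-\delta)(1-\delta_0),
\]
which is the claim. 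I expect the sharp-constant MLE bound of the third paragraph to be the hard part; the conditional reading of Theorem~\ref{thm-est-pib} is what produces the product form $(1-\delta_0)(1-\delta)$ rather than a looser union bound.
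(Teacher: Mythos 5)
Your proof is correct and follows essentially the same route as the paper: reduce $\Delta_w$ to an expected total-variation distance between $\hat\pi_b$ and $\pi_b$ (picking up the factor $2B_\Pi$), invoke the finite-class MLE concentration bound to get $\Delta_w\le 2B_\Pi\sqrt{2\log(|\Pi|/\delta_0)/|\Ds_1|}$ on a $\Ds_1$-measurable event of probability $1-\delta_0$, and combine with the conditional form of Theorem~\ref{thm-est-pib} to obtain the product $(1-\delta_0)(1-\delta)$. The only difference is that you re-derive the MLE/Hellinger bound from first principles, whereas the paper simply cites it as Theorem 21 of \citet{agarwal2020advances}; your derivation is the standard proof of that cited result.
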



\section{Synthetic Data Experiments}\label{sec4}
Due to the absence of established baselines for our problem, we compare Algorithm~\ref{al-1} (PACOPP) with the following two, both of which are distribution-free off-policy prediction algorithms capable of handling continuous action spaces.	
\subsection{Reference Algorithms.}	
\paragraph{(1) Conformal Off-Policy Prediction (COPP).} 
COPP was introduced in \citet{taufiq2022conformal} and has been demonstrated to achieve marginally valid coverage. Using a estimated behavior policy $\hat{\pi}_b$ and a estimated reward distribution $\hat{P}_R$, both trained on $\mathcal{D}_1$, COPP estimates the weight (\ref{COPP-weight}) via a Monte Carlo approach, that is, $\hat{w}(s,r) = \sum_{i=1}^{h} \hat{P}_R(r|s,a_i^e) /\sum_{i=1}^{h} \hat{P}_R(r|s,a_i)$, where $a_i\sim\hat{\pi}_b(\cdot|s), a_i^e\sim\pi_e(\cdot|s)$ and $h$ is the number of Monte Carlo samples. COPP constructs PIs based on the weighted CP framework. That is, it outputs $(s,r)$ pairs whose non-conformity scores lie below the $1-\epsilon$ quantile of the weighted empirical distribution: $\hat{F}_{n_2}^{s,r}\coloneq\sum_{\Ds_2}p_i^{\hat{w}}(s,r)\delta_{\tau_i}+p_{n_2+1}^{\hat{w}}(s,r)\delta_{\infty}$, where $p_i^{\hat{w}}(s,r)\coloneq\frac{\hat{w}(S_i,R_i)}{\sum_{\Ds_2}\hat{w}(S_i,R_i)+\hat{w}(s,r)}$, and $p_{n_2+1}^{\hat{w}}(s,r)\coloneq\frac{\hat{w}(s,r)}{\sum_{\Ds_2}\hat{w}(S_i,R_i)+\hat{w}(s,r)}$.		
\paragraph{(2) Conformal Off-Policy Prediction with Rejection Sampling (COPP-RS).}
To better address the question of "why use PAC PIs instead of marginally valid PIs," we designed an algorithm COPP-RS, which, like PACOPP, employs rejection sampling to overcome distribution shift. The only difference is that COPP-RS directly uses the $1-\epsilon$ empirical quantile of the non-conformity scores as the threshold. Specifically, denote by $\tau^{(1-\epsilon)}$ the $1-\epsilon$ quantile of the distribution $\sum_{(S_i,R_i)\in\Ds^{\mathrm{rs}}_2} \frac{1}{M+1}\delta_{\tau_i} + \frac{1}{M+1}\delta_{\infty}$. COPP-RS then outputs the PI $\hat{C}_{\tau^{(1-\epsilon)}} (S_{n+1}) = [\hat{q}_{\epsilon_{\mathrm{lo}}}(S_{n+1}) - \tau^{(1-\epsilon)}, \hat{q}_{\epsilon_{\mathrm{up}}}(S_{n+1}) + \tau^{(1-\epsilon)}] $. It is straightforward to verify that $\hat{C}_{\tau^{(1-\epsilon)}}$ also achieves marginal $1-\epsilon$ coverage under the conditions in Theorem \ref{thm-1}.

\subsection{Computational Details.}
We use an experimental setup with continuous action space similar to the one described in \citet{taufiq2022conformal}, depicted as follows. The code for the experiment was released at \url{https://anonymous.4open.science/r/PACOPP-CB2D}.

(1) The contexts and actions were sampled according to $
S \sim \Normal(0,4)$ and $ A \mid s \sim \Normal(s/4,4)
$ and,
given the context-action pair $(s,a)$, the reward $R$ was generated from a two-component Gaussian mixture distribution
$
P_R(\,\cdot\mid s,a) =			\Normal(s+a, 1)*0.2+\Normal(s+a, 16)* 0.8.	$

(2) A total of 2000 samples were generated and randomly split into two equal halves: one for training and the other for calibration. Neural networks (NNs) were trained on the training set to estimate the behavior policy  $\hat{\pi}_b$. Separately, the quantile functions $\hat{q}_{\epsilon/2}$ and $\hat{q}_{1-\epsilon/2}$ were learned by fitting NNs using the pinball loss. For the COPP algorithm, we further approximated the reward distribution $P_R$ using a misspecified conditional Gaussian model $\hat{P}_R(\, \cdot \mid s,a) = \Normal(\mu(s,a),\sigma(s,a))$, where both $\mu(s,a)$ and $\sigma(s,a)$ are NNs. Finally, the target policy was defined as
$
\pi_e(\, \cdot \mid s) = \Normal(s/4,1).
$

(3)	In each simulation run, 10,000 test data points are generated from the target distribution to evaluate the coverage rate. The nominal miscoverage level is set to $\epsilon = 0.2$, and a total of 1000 independent simulations are conducted. 

\subsection{Results.}
Figure~\ref{fig-2} reports the coverage and interval lengths of different methods. For the PACOPP algorithm, we set $\delta = 0.5$, $0.25$, $0.1$ and $0.01$, indicated by PAC-0.5, PAC-0.25, PAC-0.1 and PAC-0.01, respectively. The main findings were summarized as follows.

First, COPP generally failed to achieve the nominal coverage due to the misspecification in the estimated conditional density model. In contrast, the other two methods based on rejection sampling (COPP-RS and PACOPP) attained the desired coverage in most cases. Second, compared to COPP-RS, PACOPP offered additional control over the probability of attaining nominal coverage. Notably, as the confidence level increased, the interval length grew only moderately. Even under the stringent requirement of $\delta = 0.01$, the PIs produced by PACOPP were not overly conservative.

\begin{figure}
	\centering
	\includegraphics[width=0.40\linewidth]{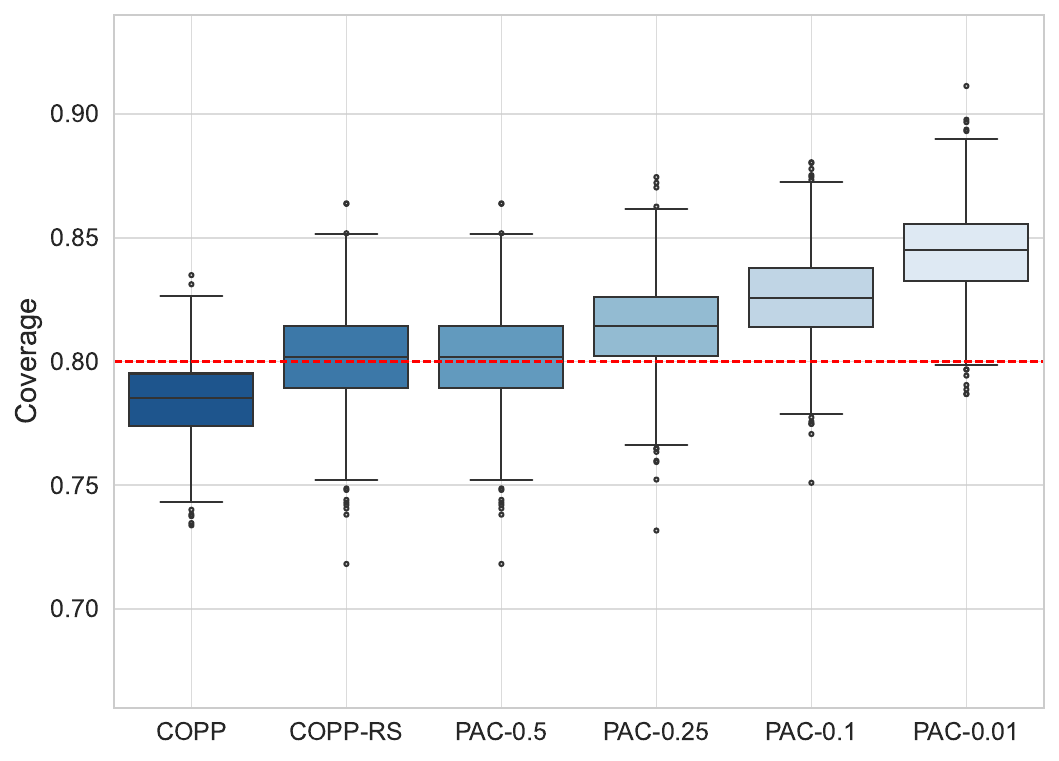}\quad
	\includegraphics[width=0.40\linewidth]{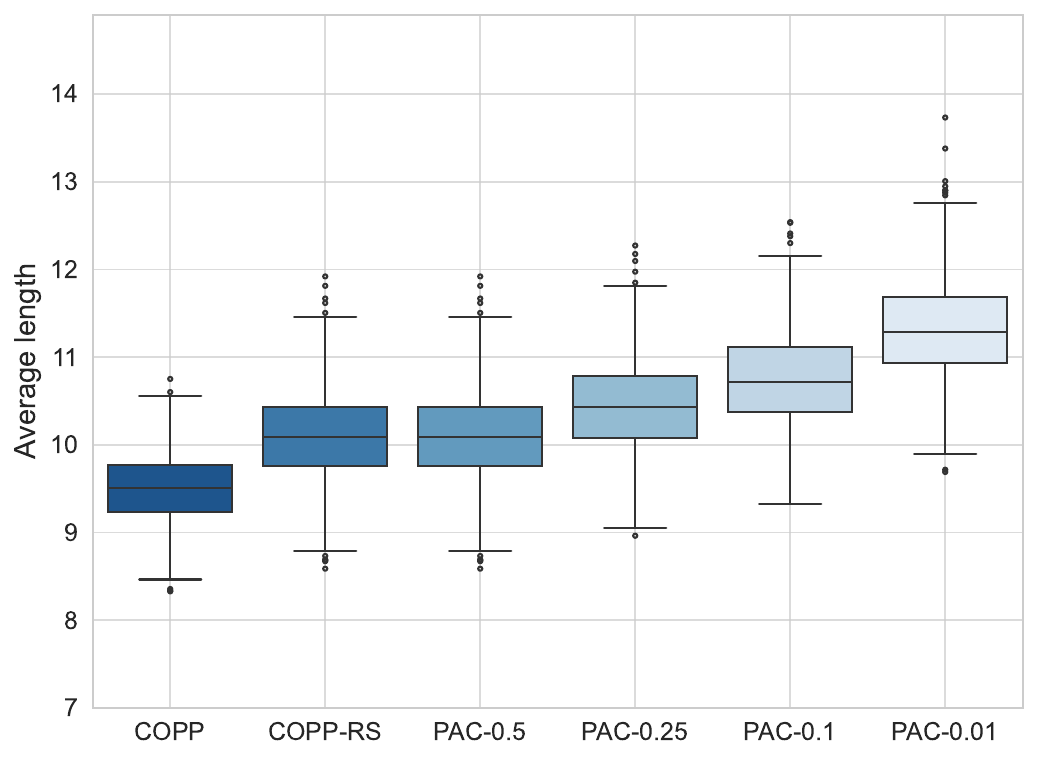}
	\caption{Empirical coverages and average lengths of prediction intervals based on COPP, COPP-RS and PACOPP with $\delta = 0.5, 0.15 , 0.1$ and $0.01$.  The nominal level is $80\%$.}
	\label{fig-2}
\end{figure}		

Compared to marginally valid methods, PACOPP provided a principled mechanism for navigating the trade-off between confidence and interval efficiency. Therefore, in safety-critical scenarios where a high confidence guarantee is prioritized even at a price of slight conservativeness, PACOPP may be preferable to marginally valid alternatives.


\section{Related Work}\label{sec5}

{\bf Off-policy evaluation.}
OPE is one of the most fundamental topics in Reinforcement learning \citep{sutton2018reinforcement} and has been extensively studied, resulting in a vast body of literature. The primary challenge in OPE arises from the distribution shift in rewards, induced by the discrepancy between the behavior and target policies. Current methods, which typically focus on estimating the expected reward (policy value), are broadly categorized into three main approaches: (i) importance sampling \citep{precup2000eligibility,liu2018breaking, schlegel2019importance}, known for its unbiased nature but susceptible to high variance; (ii) direct methods \citep{thomas2016data, le2019batch,shi2022statistical}, which directly learn the model before policy evaluation, potentially introducing bias but offering lower variance; and (iii) doubly robust methods \citep{dudik2011doubly, jiang2016doubly,kallus2020double}, which combine the first two approaches to achieve more robust estimators. For an extensive review, we refer readers to \cite{uehara2022review}.   

In addition to point estimates for the value of the target policy, less attention has been paid to interval estimates of the policy value for uncertainty quantification. To provide confidence regarding the accuracy of these estimates, \cite{thomas2015high} proposed high confidence off-policy evaluation, which derives a lower confidence bound on the target policy's value by applying concentration inequalities to importance sampling estimates. Other approaches, such as bootstrap \citep{hanna2017bootstrapping}, kernel Bellman loss \citep{feng2020accountable} and empirical likelihood methods \citep{dai2020coindice}, have also been employed to derive confidence intervals. Nevertheless, all of these methods focus on the average effect of the target policy and do not account for the variability in the reward itself.

{\bf Conformal prediction.}
CP has gained popularity in OPE due to its ability to construct distribution-free PIs with finite-sample guarantees and account for individual effects. The application of CP to OPE originated from \cite{tibshirani2019conformal}, who developed the ``weighted conformal prediction'' method that extends standard CP to covariate shift settings, in which the covariate distributions of test and training data differ, while the conditional distributions remain the same. This method offers an valuable approach to addressing distribution shift and was subsequently applied to OPE in contextual bandits \citep{taufiq2022conformal} (COPP) and Markov decision processes \citep{foffano2023conformal}.

As discussed in Section \ref{sec4}, these direct application of weighted CP in OPE requires estimating the conditional probability densities of rewards, and may underperform if the model is misspecified. Additionally, COPP directly estimates the conditional quantiles from observational data. As a result, the algorithm essentially calibrates intervals constructed from the estimated conditional quantiles under the behavior policy. Even if the quantile estimation algorithm is consistent, the resulting PIs will not converge to the oracle intervals defined in (\ref{oracle-PI}). In contrast, PACOPP ensures this convergence property (Theorem \ref{thm-asymptotic}). Furthermore, since the weight (\ref{COPP-weight}) depends on both $s$ and $r$, COPP must use a grid of potential values for $R_{n+1}$ corresponding to each $S_{n+1}$ when generating the final PI, introducing additional computational overhead. By contrast, PACOPP explicitly outputs PIs without this extra burden.

In \cite{zhang2023conformal}, the authors avoid model estimation by selecting subsamples from observational data where the action matches the pseudo action generated by a designed auxiliary policy. These subsamples preserve the same conditional distribution as the target population, enabling the use of weighted CP for PI construction. However, their method is inherently limited, as it is only applicable when the action space is discrete. In contrast, our approach, PACOPP, is applicable to continuous action spaces, such as the doses of medication administered in precision medicine.

As discussed in Section \ref{sec3}, although split (inductive) CP has been shown in \cite{vovk2013conditional} to automatically achieve training conditional validity in a PAC type, it is not applicable in our context due to its sample size requirement. In \cite{park2020pac}, the authors proposed an modified version of split CP that constructs confidence sets for deep neural networks with finite-sample PAC validity. This method was further extended to handle covariate shift settings in \cite{park2022pac}, where it was adjusted using Clopper-Pearson upper bounds to handle cases when the importance weights (the likelihood ratio of covariate distributions) are unknown, but confidence intervals for these weights are available. We note that their candidate intervals are constructed in a different form, and the associated threshold is determined by solving an optimization problem, as mentioned in Section~\ref{sec3}, In contrast, the construction in PACOPP admits an explicit threshold of the form (\ref{def-tilde_tau}). Most importantly, we theoretically justified the efficiency of our method, both in finite samples and asymptotically, whereas their approach lacks such theoretical guarantees.


\section{Conclusion}\label{sec6}

In this paper, we proposed a novel algorithm, PACOPP, for constructing reliable prediction intervals in the OPE task of contextual bandits. PACOPP provides distribution-free finite-sample validity and is proven to be asymptotically efficient under mild assumptions. Distinct from existing methods, PACOPP is the first to achieve PAC validity, enabling explicit control over the confidence level of achieving the target coverage.

We addressed distribution shift through rejection sampling with the cost of reduction of available sample size, resulting in lower data utilization and ultimately leading to more conservative PIs. Enhancing the data utilization in this setting remains a challenge that requires further research efforts. Additionally, extending PACOPP to general sequential decision-making scenarios beyond contextual bandits would be an promising direction for future research.



\bibliographystyle{plainnat}    
\bibliography{bibliography}

\newpage

\appendix
\section{Proofs of the Main Results}

\subsection{Proof of Proposition \ref{pro-1}}\label{pf-pro-1}
\begin{proof}
	Thanks to the independence between $\cal D$ and $\{V_i,i=1,2\dots,n\}$,  $$\Pb\left(V_i\leq \frac{1}{B}w(S_i,A_i)\right)=\int_\Ss\int_\As\frac1B\frac{\pi_e}{\pi_b}( a|s)\pi_b( a|s)P_S( s ) \drm a \drm s =\frac{1}{B},i\in[n].$$ Thus, the size $N_{\mathrm{rs}}\sim \Bin(n,\frac{1}{B})$.
	
	For any $\{(s_j,r_j)\}_{j=1}^{N_{\mathrm{rs}}}\in(\Ss\times\Rb)^{N_{\mathrm{rs}}}$, conditional on the event $\{N_{\mathrm{rs}}=n_{\mathrm{rs}}\}$,
	\begin{align*}
		&\Pb(Z_j=(s_j, r_j),\forall j\in[N_{\mathrm{rs}}]\mid N_{\mathrm{rs}}=n_{\mathrm{rs}})\\
		=&\frac{1}{\Pb(N_{\mathrm{rs}}=n_{\mathrm{rs}})}\Pb\left(\exists\;\sigma_1<\cdots<\sigma_{n_{\mathrm{rs}}}\in[n]\; s.t.\;(S_{\sigma_j},R_{\sigma_j})=(s_j, r_j), V_{\sigma_j}\leq \frac{1}{B}w(S_{\sigma_j},R_{\sigma_j}),\right.\\
		&\left.\qquad\qquad\qquad\qquad\qquad\qquad \forall j\in[n_{\mathrm{rs}}], \text{and }V_i> \frac{1}{B}w(S_i,A_i),\forall i\in[n]\setminus\{\sigma_1,\ldots,\sigma_{n_{\mathrm{rs}}}\}\right)\\
		=&\frac{1}{\binom{n}{n_{\mathrm{rs}}}B^{-n_{\mathrm{rs}}}(1-\frac{1}{B})^{n-n_{\mathrm{rs}}}}\sum_{\sigma_1<\cdots<\sigma_{n_{\mathrm{rs}}}}
		\Big[B^{-n_{\mathrm{rs}}}(1-\frac{1}{B})^{n-n_{\mathrm{rs}}}
		\\ &\qquad\qquad\qquad\qquad\qquad\qquad\times\prod_{j\in[n_{\mathrm{rs}}]}\Pb\Big((S_{\sigma_j},R_{\sigma_j})=(s_j, r_j)\;|\;V_{\sigma_j}\leq \frac{1}{B}w(S_{\sigma_j},R_{\sigma_j})\Big)\Big],
	\end{align*}
	where the summation is taken over all possible choices of $\sigma_1,\ldots,\sigma_{n_{\mathrm{rs}}}$. By the fact that for any $i$,
	\begin{align*}
		&\Pb\left((S_i,R_i)=(s, r)\mid V_i\leq \frac{1}{B}w(S_i,R_i)\right)
		\\&\qquad=B\int_\As\frac{1}{B}\frac{\pi_e}{\pi_b}( a|s) P_S( s)\pi_b( a|s)P_{R}( r|s,a)\drm a  = P^{\pi_e}( s,r),
	\end{align*}
	we then have
	$$
	\Pb(Z_j=(s_j, r_j),\forall j\in[N_{\mathrm{rs}}]\mid N_{\mathrm{rs}}=n_{\mathrm{rs}})=\prod_{j\in[n_{\mathrm{rs}}]}P^{\pi_e}( s_j, r_j).
	$$
	The proof is complete.
\end{proof}

\subsection{Proof of Theorem \ref{thm-1}}\label{pf-thm-1}
\begin{proof}
	We first consider the probability of $\{L_{P^{\pi_e}}(\hat{C}_{\tau})\leq\epsilon\}$ conditional on the event that the RS procedure generates $N_{\mathrm{rs}}=n_{\mathrm{rs}}$ samples. We omit the trivial cases of $n_{\mathrm{rs}}=0$, when $\hat{C}_\tau(s)$ can be set to $\Rb$. For nontrivial cases, Proposition \ref{pro-1} enables us to interpret the conditional probability as
	$$
	\Pb[L_{P^{\pi_e}}(\hat{C}_\tau)\leq\epsilon\mid N_{\mathrm{rs}}=n_{\mathrm{rs}}]=\Pb_{\Ds_{\mathrm{rs}}\sim(P^{\pi_e})^{n_{\mathrm{rs}}}}[L_{P^{\pi_e}}(\hat{C}_\tau)\leq\epsilon].
	$$
	Now, for any partition of $\Ds_{\mathrm{rs}}$ and any realization of the set $\Ds^{\mathrm{rs}}_1$, the parameterized interval $\hat{C}_\tau(s)$ for a fixed $s$ is nonrandom after training $\hat{q}_{\epsilon_{\mathrm{lo}}}$ and $\hat{q}_{\epsilon_{\mathrm{up}}}$, and depends only on $\tau$. It can be easily verified from the definition (\ref{def-Ctau}) that the miscoverage $L_{P^{\pi_e}}(\hat{C}_\tau)=\Pb_{(S_{n+1},R_{n+1})\sim P^{\pi_e}}(R_{n+1}\notin \hat{C}_\tau(S_{n+1}))$, as a function of $\tau$, is monotonically decreasing and right-continuous. Next, we define
	\begin{equation}\label{def-pf-thm1-tau*}
		\tau^*\coloneq\inf\{\tau\in\Rb:L_{P^{\pi_e}}(\hat{C}_\tau)\leq\epsilon\},
	\end{equation}
	and let $\{\alpha_j\}_{i=1}^\infty$ be a positive sequence such that $\alpha_j\downarrow0$. Denote by $m=\lceil\gamma n_{\mathrm{rs}}\rceil$ the size of $\Ds^{\mathrm{rs}}_2$. For $\tilde{\tau}=\tau_{(m-k(m,\epsilon,\delta))}$ with $k(m,\epsilon,\delta)\neq -1$, the right-continuity implies that the event
	$$\{L_{P^{\pi_e}}(\hat{C}_{\tilde{\tau}})>\epsilon\}=\{\tilde{\tau}<\tau^*\}=\bigcup_{j=1}^\infty\{\tilde{\tau}\leq\tau^*-\alpha_j\}.$$
	Since $\tau_i$ represents the minimal $\tau$ such that $R_i\in\hat{C}_\tau(S_i)$ in $\Ds^{\mathrm{rs}}_2$,
	\begin{align*}
		\{\tilde{\tau}\leq \tau^*-\alpha_j\}&=\{ \exists \text{ at most } k(m,\epsilon,\delta) \text{ indices }i\;s.t.\;\tau_i > \tau^*-\alpha_j\}\\
		&=\{ \exists \text{ at most } k(m,\epsilon,\delta) \text{ indices }i\;s.t.\;R_i \notin \hat{C}_{\tau^*-\alpha_j}(S_i)\}.
	\end{align*}
	As $\Ds^{\mathrm{rs}}_2\sim(P^{\pi_e})^m$, each sample in $\Ds^{\mathrm{rs}}_2$ independently satisfies $R_i \notin \hat{C}_{\tau^*-\alpha_j}(S_i)$ with probability $L_{P^{\pi_e}}(\hat{C}_{\tau^*-\alpha_j})$, and we have that $L_{P^{\pi_e}}(\hat{C}_{\tau^*-\alpha_j})>\epsilon$ by the definition of $\tau^*$. Then, it holds
	$$  \Pb_{\Ds^{\mathrm{rs}}_2\sim(P^{\pi_e})^m}(\tilde{\tau}\leq\tau^*-\alpha_j)
	\leq  F_{\text{Bin}(m,L_{P^{\pi_e}}(\hat{C}_{\tau^*-\alpha_j}))}(k(m,\epsilon,\delta))
	\leq  F_{\text{Bin}(m,\epsilon)}(k(m,\epsilon,\delta))
	\leq \delta,$$
	where the second inequality follows from the fact that, for a fixed $k$, the c.d.f. $F_{\text{Bin}(m,\varepsilon)}(k)\coloneq\sum_{i=0}^k\binom{m}{i}\varepsilon^i(1-\varepsilon)^{m-i}$ is decreasing w.r.t. $\varepsilon\in[0,1]$, and the last inequality follows from the definition of $k(m,\epsilon,\delta)$. Together with the continuity of measures, we have
	\begin{equation}\label{eq-pf-thm1}
		\Pb_{\Ds^{\mathrm{rs}}_2\sim(P^{\pi_e})^m}[L_{P^{\pi_e}}(\hat{C}_{\tilde{\tau}})>\epsilon]=\lim_{j\to\infty}\Pb_{\Ds^{\mathrm{rs}}_2\sim(P^{\pi_e})^m}(\tilde{\tau}\leq \tau^*-\alpha_j)\leq\delta,
	\end{equation}
	which also holds if $k(m,\epsilon,\delta)=-1$, in which case $L_{P^{\pi_e}}(\hat{C}_{\tilde{\tau}})=0$.
	
	Finally, since (\ref{eq-pf-thm1}) is true for any partition and realization of $\Ds^{\mathrm{rs}}_1$, we can marginalize to obtain
	$$
	\Pb_{\Ds_{\mathrm{rs}}\sim(P^{\pi_e})^{n_{\mathrm{rs}}}}[L_{P^{\pi_e}}(\hat{C}_{\tilde{\tau}})\leq\epsilon]\geq1-\delta.
	$$
	By multiplying both sides by $\Pb(N_{\mathrm{rs}}=n_{\mathrm{rs}})$ and summing over all possible $n_{\mathrm{rs}}$, the proof is complete.
\end{proof}

\subsection{Proof of Theorem \ref{lem-upperbound}}\label{pf-thm-bothsides}
The proof of Theorem \ref{lem-upperbound} relies on the well-known Berry-Esseen inequality, presented in Lemma \ref{lem-BerryEsseen} (Theorem 3.4.17 in \cite{durrett2019probability}). For convenience of reference, we list it as follows. 
\begin{lem}[Berry-Esseen inequality]\label{lem-BerryEsseen}
	Let $X_1,X_2,\ldots$ be i.i.d. with $\Eb X_i=0$, $\Eb X_i^2=\sigma^2$ and $\Eb|X_i|^3=\rho<\infty$. If $F_n(x)$ is the distribution function of $\sum_{i=1}^nX_i/(\sigma\sqrt{n})$ and $\Phi(x)$ is the standard normal
	distribution function, then it holds for all $n$ that
	$$
	\sup_{x\in\Rb}\left|F_n(x) -\Phi(x) \right|\leq\frac{3\rho}{\sigma^3\sqrt{n}}.
	$$
\end{lem}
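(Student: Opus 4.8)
The plan is to prove this classical estimate by the Fourier-analytic route, reducing the Kolmogorov distance to the distance between characteristic functions through a smoothing (Esseen) inequality. First I would normalize: replacing each $X_i$ by $X_i/\sigma$ leaves $F_n$ unchanged and turns the target into $\sup_x|F_n(x)-\Phi(x)|\le 3\gamma/\sqrt{n}$ with $\gamma=\rho/\sigma^3=\Eb|X_i/\sigma|^3$, so I may assume $\sigma=1$ throughout. Let $\phi(t)=\Eb[e^{itX_1}]$ be the common characteristic function; then the characteristic function of the normalized sum $\sum_{i=1}^n X_i/\sqrt{n}$ is $\phi(t/\sqrt{n})^n$, while that of $\Phi$ is $e^{-t^2/2}$. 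I would also dispose of the regime $\gamma/\sqrt{n}\ge 1/3$ immediately, since there the claimed bound exceeds $1$ and is trivially true because $\sup_x|F_n(x)-\Phi(x)|\le 1$; this restricts attention to the small-$\gamma/\sqrt{n}$ range where the delicate estimates live.

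The central tool is the smoothing inequality: for any $T>0$,
$$
\sup_{x\in\Rb}|F_n(x)-\Phi(x)|\le \frac{1}{\pi}\int_{-T}^{T}\left|\frac{\phi(t/\sqrt{n})^n-e^{-t^2/2}}{t}\right|\,\drm t+\frac{c_0}{T},
$$
where the truncation term comes from the bounded density of $\Phi$ (with $\sup|\Phi'|=1/\sqrt{2\pi}$) and $c_0$ is an explicit numerical constant. This converts the problem into bounding an integral over a finite frequency window plus an error that vanishes as $T\to\infty$, so the entire game becomes a careful choice of $T$ together with a pointwise bound on the integrand.

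To control the integrand I would Taylor-expand $\phi$. Using $\Eb X=0$, $\Eb X^2=1$ and $\Eb|X|^3=\gamma<\infty$, the standard remainder estimate gives $|\phi(u)-(1-u^2/2)|\le \gamma|u|^3/6$, whence $|\phi(u)|\le 1-u^2/2+\gamma|u|^3/6\le e^{-u^2/3}$ once $\gamma|u|\le 1$ (using $1+x\le e^x$). Setting $u=t/\sqrt{n}$ and taking $T$ of order $\sqrt{n}/\gamma$ keeps $|t/\sqrt{n}|$ below this threshold, so that $|\phi(t/\sqrt{n})|\le e^{-t^2/(3n)}$ on the integration window. I would then compare $\phi(t/\sqrt{n})^n$ to $e^{-t^2/2}$ via the telescoping inequality $|a^n-b^n|\le n\,\max(|a|,|b|)^{n-1}|a-b|$ for complex $a,b$, which reduces the whole difference to the single-factor cubic Taylor bound; after dividing by $|t|$ the integrand is dominated by a constant multiple of $\gamma\,t^2e^{-t^2/3}/\sqrt{n}$, which integrates over $\Rb$ to a finite constant times $\gamma/\sqrt{n}$. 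Choosing $T$ proportional to $\sqrt{n}/\gamma$ then makes the truncation term $c_0/T$ of the same order, and adding the two pieces yields a bound $C\gamma/\sqrt{n}$.

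The main obstacle is not the structure of the argument but the constant: a naive combination of the smoothing inequality with the cubic remainder produces a universal constant far larger than $3$. Pinning it down to exactly $3$ requires using a sharp form of the smoothing inequality, optimizing the free parameter $T$ against the two competing contributions, and bookkeeping the remainder terms carefully rather than discarding constants. My intended route is therefore to first isolate the trivial regime, then carry the explicit numerical constants through the Taylor estimate and the product comparison, tuning $T$ at the end so that the integral term and the $c_0/T$ term jointly close to the stated inequality $\sup_x|F_n(x)-\Phi(x)|\le 3\rho/(\sigma^3\sqrt{n})$.
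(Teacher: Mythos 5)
Note first that the paper does not prove this lemma at all: it is quoted verbatim from \cite{durrett2019probability} (Theorem 3.4.17) purely as a tool for Theorem \ref{lem-upperbound}, so the only meaningful comparison is with the canonical proof in that reference. Your outline follows exactly that canonical route: normalize to $\sigma=1$, dispose of the regime where $3\gamma/\sqrt{n}\ge 1$ by the trivial bound $\sup_x|F_n(x)-\Phi(x)|\le 1$, apply Esseen's smoothing inequality to pass to characteristic functions, bound $|\phi(u)-(1-u^2/2)|\le\gamma|u|^3/6$, dominate $|\phi(t/\sqrt{n})|$ by a Gaussian on the window $|t|\le T\asymp\sqrt{n}/\gamma$, and compare $\phi(t/\sqrt{n})^n$ with $e^{-t^2/2}$ by the telescoping bound $|a^n-b^n|\le n\max(|a|,|b|)^{n-1}|a-b|$. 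All of these steps are sound (one small omission: to ensure $u^2\le 2$ on the window so that $|1-u^2/2|=1-u^2/2$, you need $\gamma\ge 1$, which follows from Lyapunov's inequality $\Eb|X|^3\ge(\Eb X^2)^{3/2}$ after normalization), and they do yield $\sup_x|F_n(x)-\Phi(x)|\le C\rho/(\sigma^3\sqrt{n})$ for some universal $C$.

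The genuine gap is the constant. The lemma asserts the specific constant $3$, and your proposal never establishes it; you concede that the naive combination gives a much larger constant and defer the issue to an unspecified ``sharp form of the smoothing inequality'' and ``tuning of $T$.'' This is not bookkeeping that can be waved through: running your own estimates honestly, the integral term contributes roughly $0.7\,\gamma/\sqrt{n}$ while the truncation term $24m/(\pi T)$ with $m=1/\sqrt{2\pi}$ and the constraint $T\le\sqrt{n}/\gamma$ (forced by your threshold $\gamma|u|\le 1$) contributes about $3.05\,\gamma/\sqrt{n}$, for a total near $3.7\,\gamma/\sqrt{n}$ --- strictly worse than claimed. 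Durrett's proof of the constant $3$ requires taking $T$ larger than $\sqrt{n}/\gamma$ and replacing your simple exponential majorant of $|\phi|$ with finer estimates valid on that bigger window, together with a more careful split of the integral; this is the substantive content of the several-page argument in the reference, and it is exactly the part your proposal leaves open. As written, you have proved a Berry--Esseen bound with an unspecified constant, not the stated inequality.
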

\begin{proof}
	Similar to the proof of Theorem \ref{thm-1}, we first show that there exists some $C_1>0$, s.t.
	\begin{equation}\label{eq-pf-lem-1}
		\Pb\left[ L_{P^{\pi_e}}(\hat{C}_{\tilde{\tau}})\leq\epsilon \mid N_{\mathrm{rs}}=n_{\mathrm{rs}}, \Ds^{\mathrm{rs}}_1\right]=\Pb_{\Ds^{\mathrm{rs}}_2\sim(P^{\pi_e})^m}\left[ L_{P^{\pi_e}}(\hat{C}_{\tilde{\tau}})\leq\epsilon\right] < 1-\delta+\frac{C_1}{\sqrt{m}},
	\end{equation}
	provided $k(m,\epsilon,\delta)\neq -1$. Recall $\tilde{\tau}=\tau_{(m-k(m,\epsilon,\delta))}$. For $\tau^*$ defined in (\ref{def-pf-thm1-tau*}),
	\begin{align*}
		\{L_{P^{\pi_e}}(\hat{C}_{\tilde{\tau}})\leq\epsilon\}=\{\tilde{\tau}\geq\tau^*\}&=\{ \exists \text{ at least } k(m,\epsilon,\delta)+1 \text{ indices }i\;s.t.\;\tau_i \geq \tau^*\}\\
		&\subset\{ \exists \text{ at least } k(m,\epsilon,\delta) \text{ indices }i\;s.t.\;\tau_i > \tau^*\}\\
		&=\{ \exists \text{ at least } k(m,\epsilon,\delta) \text{ indices }i\;s.t.\;R_i \notin \hat{C}_{\tau^*}(S_i)\},
	\end{align*}
	since there are almost surely no ties. Together with the fact $L_{P^{\pi_e}}(\hat{C}_{\tau^*})\leq \epsilon$, we have
	\begin{align*}
		&\Pb_{\Ds^{\mathrm{rs}}_2\sim(P^{\pi_e})^m}[L_{P^{\pi_e}}(\hat{C}_{\tilde{\tau}})\leq\epsilon]\leq 1-F_{\text{Bin}(m,L_{P^{\pi_e}}(\hat{C}_{\tau^*}))}(k(m,\epsilon,\delta)-1)\\
		&\quad\leq 1-F_{\text{Bin}(m,\epsilon)}(k(m,\epsilon,\delta)-1)\\
		&\quad=1-F_{\text{Bin}(m,\epsilon)}(k(m,\epsilon,\delta)+1)+F_{\text{Bin}(m,\epsilon)}(k(m,\epsilon,\delta)+1)
		-F_{\text{Bin}(m,\epsilon)}(k(m,\epsilon,\delta)-1)\\
		&\quad<1-\delta+F_{\text{Bin}(m,\epsilon)}(k(m,\epsilon,\delta)+1)-F_{\text{Bin}(m,\epsilon)}(k(m,\epsilon,\delta)-1),
	\end{align*}
	where the last inequality follows from the definition of $k(m,\epsilon,\delta)$. To bound the last two terms, we note that 
	\begin{align}
		&F_{\text{Bin}(m,\epsilon)}(k(m,\epsilon,\delta)+1)-F_{\text{Bin}(m,\epsilon)}(k(m,\epsilon,\delta)-1)\nonumber\\
		&=F_{m,\epsilon}(x_m)-\Phi(x_m)-F_{m,\epsilon}(\sqrt{m} y_m)+\Phi(y_m)+\Phi(x_m)-\Phi(y_m)\label{est-1},
	\end{align}
	where $\sigma_\epsilon:=\sqrt{\epsilon(1-\epsilon)}$, $x_m=\frac{k(m,\epsilon,\delta)+1-m\epsilon}{\sigma_\epsilon \sqrt{m}}$, $y_m=\frac{k(m,\epsilon,\delta)-1-m\epsilon}{\sigma_\epsilon \sqrt{m}}$ and $F_{m,\epsilon}$ is the distribution function of $\sum_{i=1}^mX_i/(\sigma_\epsilon\sqrt{m})$. Here $X_i$ are i.i.d and $\Pb(X_i=-\epsilon)=1 - \epsilon=1-\Pb(X_i=1-\epsilon)$. It then follows from Lemma \ref{lem-BerryEsseen} that
	\begin{align*}
		&F_{\text{Bin}(m,\epsilon)}(k(m,\epsilon,\delta)+1)-F_{\text{Bin}(m,\epsilon)}(k(m,\epsilon,\delta)-1)\\
		&\quad\leq 2\sup_{x\in\Rb}\left|F_{m,\epsilon}(x)- \Phi(x)\right|
		+\Phi(x_m)-\Phi(y_m)\\
		&\quad\leq \frac{6(\epsilon(1-\epsilon)^3+(1-\epsilon)\epsilon^3)}{\sigma_\epsilon^3}\frac{1}{\sqrt{m}}
		+\sqrt{\frac{2}{\pi}}\frac{1}{\sigma_\epsilon\sqrt{m}}\leq 7\frac{1}{\sigma_\epsilon\sqrt{m}}.
	\end{align*}
	Therefore, (\ref{eq-pf-lem-1}) holds with $C_1=7/\sigma_\epsilon$ provided $k(m,\epsilon,\delta)\neq-1$.

	Secondly, as $m= \lceil\gamma n_{\mathrm{rs}}\rceil $, we marginalize (\ref{eq-pf-lem-1}) and get that for $n_{\mathrm{rs}}\geq \lfloor m_0/\gamma\rfloor+1$, where $m_0\coloneq \log_{1-\epsilon}{\delta}$, 
	\begin{align*}
		\Pb\left[ L_{P^{\pi_e}}(\hat{C}_{\tilde{\tau}})\leq\epsilon|N_{\mathrm{rs}}=n_{\mathrm{rs}} \right]&=\Eb\left[\Pb\left( L_{P^{\pi_e}}(\hat{C}_{\tilde{\tau}})\leq\epsilon \big|N_{\mathrm{rs}}=n_{\mathrm{rs}},\Ds^{\mathrm{rs}}_1 \right)\Big|N_{\mathrm{rs}}=n_{\mathrm{rs}}\right]
		\\&< 1-\delta+\frac{C_1}{\sqrt{\lceil\gamma n_{\mathrm{rs}}\rceil}}
		\leq 1-\delta+\frac{C_1}{\sqrt{\gamma n_{\mathrm{rs}}}},
	\end{align*}
	because $k(m,\epsilon,\delta)\neq -1$ for all  $m \geq m_0$.

	Finally, we prove the desired conclusion of Theorem \ref{lem-upperbound}. Note that for $n\geq \lfloor m_0/\gamma\rfloor+1$,
	\begin{align*}
		&\Pb\left[ L_{P^{\pi_e}}(\hat{C}_{\tilde{\tau}})\leq\epsilon\right]=\sum_{n_{\mathrm{rs}}=0}^{n}\Pb(N_{\mathrm{rs}}=n_{\mathrm{rs}})\Pb\left[ L_{P^{\pi_e}}(\hat{C}_{\tilde{\tau}})\leq\epsilon|N_{\mathrm{rs}}=n_{\mathrm{rs}}\right]\nonumber\\
		&<\sum_{n_{\mathrm{rs}}=0}^{\lfloor m_0/\gamma\rfloor} \Pb(N_{\mathrm{rs}}=n_{\mathrm{rs}})+
		\sum_{n_{\mathrm{rs}}=\lfloor m_0/\gamma\rfloor +1}^{n}\Pb(N_{\mathrm{rs}}=n_{\mathrm{rs}})(1-\delta+\frac{C_1}{\sqrt{\gamma n_{\mathrm{rs}}}})\nonumber\\
		&\leq 1-\delta +F_{\Bin(n,1/B)}(\lfloor m_0/\gamma\rfloor)+   \frac{C_1}{\sqrt{\gamma}} \sum_{n_{\mathrm{rs}}=\lfloor m_0/\gamma\rfloor +1}^{n} \frac{1}{\sqrt{n_{\mathrm{rs}}}}\binom{n}{n_{\mathrm{rs}}}\frac{1}{B^{n_{\mathrm{rs}}}}(1-\frac{1}{B})^{n-n_{\mathrm{rs}}}\nonumber\\
		&\leq 1-\delta +F_{\Bin(n,1/B)}(\lfloor m_0/\gamma\rfloor)+ \frac{C_1B}{\sqrt{\gamma n}} \sum_{n_{\mathrm{rs}}=\lfloor m_0/\gamma\rfloor +1}^{n}\binom{n+1}{n_{\mathrm{rs}}+1}\frac{1}{B^{n_{\mathrm{rs}}+1}}(1-\frac{1}{B})^{n-n_{\mathrm{rs}}}\nonumber
		\\	&\leq 1-\delta + \frac{C_1B}{\sqrt{\gamma n}}+F_{\Bin(n,1/B)}(\lfloor m_0/\gamma\rfloor)
	\end{align*}
	Furthermore, by using the Hoeffding's inequality, we have that for all $n > B \lfloor m_0/\gamma \rfloor$,
	$$
	F_{\Bin(n,1/B)}(\lfloor m_0/\gamma\rfloor)\leq \exp\left\{-2n\left(\frac{1}{B}-\frac{\lfloor m_0/\gamma\rfloor}{n}\right)^2\right\}\leq \frac{C_2}{\sqrt{n}}, 
	$$
	where $C_2 = \sqrt{B \lfloor m_0/\gamma \rfloor} + B/2$ follows from straightforward calculations. 
	Clearly, $\frac{C_2}{\sqrt{n}} >1$ when $n\leq  \lfloor m_0/\gamma\rfloor \leq B  \lfloor m_0/\gamma\rfloor$. Therefore, we get that
	\begin{align*}
		\Pb\left[ L_{P^{\pi_e}}(\hat{C}_{\tilde{\tau}})\leq\epsilon\right]<1-\delta+\frac{C_1B/\sqrt{\gamma}+C_2 }{\sqrt{n}}  
	\end{align*}
	for all  $n\geq 1$. 
\end{proof}

\subsection{Proof of Theorem \ref{thm-bothsides}}\label{pf-thm-bothsides-2}
Essentially, Theorem \ref{thm-bothsides} can be derived by the argument similar to Theorem \ref{lem-upperbound}. The details are stated below.

\begin{proof}
	Denote $ \epsilon' \coloneq \epsilon-\Delta_\epsilon$ and define $\delta'\coloneq F_{\Bin(m,\epsilon')}(k(m,\epsilon,\delta))$. We have
	$
	k(m,\epsilon',\delta')=k(m,\epsilon,\delta)
	$
	by the definition (\ref{def-k}). Then, for all $m\geq m_0$ such that 
	$
	k(m,\epsilon',\delta')=k(m,\epsilon,\delta) \neq -1
	$, it follows from (\ref{eq-pf-lem-1}) that 
	$$
	\Pb\left[  L_{P^{\pi_e}}(\hat{C}_{\tilde{\tau}}) \leq  \epsilon' \mid N_{\mathrm{rs}}=n_{\mathrm{rs}}, \Ds^{\mathrm{rs}}_1\right] < 1-\delta'+\frac{C_3}{\sqrt{m}},
	$$
	with $C_3=7/\sqrt{\epsilon'(1-\epsilon')}$. Therefore, 
	\begin{align*}
		&\Pb\left[\epsilon' < L_{P^{\pi_e}}(\hat{C}_{\tilde{\tau}})\leq\epsilon \mid N_{\mathrm{rs}}=n_{\mathrm{rs}}, \Ds^{\mathrm{rs}}_1 \right]\\ =&1-\Pb\left[L_{P^{\pi_e}}(\hat{C}_{\tilde{\tau}})\leq\epsilon' \mid N_{\mathrm{rs}}=n_{\mathrm{rs}}, \Ds^{\mathrm{rs}}_1 \right]
		-\Pb\left[L_{P^{\pi_e}}(\hat{C}_{\tilde{\tau}})>\epsilon \mid N_{\mathrm{rs}}=n_{\mathrm{rs}}, \Ds^{\mathrm{rs}}_1 \right]\\
		>&\delta'-\delta-\frac{C_3}{\sqrt{m}}.
	\end{align*}
	We next show that $\delta'$ approaches 1 at an exponential rate. Hoeffding's inequality yields that for any $t>0$,
	$
	F_{\text{Bin}(m,\epsilon)}(m\epsilon-mt)\leq \exp\{-2mt^2\}.
	$
	Choosing  $t=\sqrt{\frac{\log \delta}{-2m}}$ leads to a lower bound for $k(m,\epsilon,\delta)$:
	\begin{equation}\label{k-bound}
		k(m,\epsilon,\delta)\geq m(\epsilon- \sqrt{ \frac{\log\delta}{-2m}} ).
	\end{equation}
	When $\sqrt{ \frac{\log\delta}{-2m}} < \Delta_\epsilon$, it follows again from Hoeffding's inequality that
	$$
	\delta'\geq F_{\Bin(m,\epsilon')}[ m(\epsilon- \sqrt{ \frac{\log\delta}{-2m}} ) ] \geq 1- \exp( -2m[\Delta_\epsilon- \sqrt{ \frac{\log\delta}{-2m}} ]^2) .
	$$
	Therefore, for $m> m_1\coloneq m_0\vee \frac{\log\delta}{-2\Delta_\epsilon^2}$, we have
	$$
	\Pb\left[\epsilon' < L_{P^{\pi_e}}(\hat{C}_{\tilde{\tau}})\leq\epsilon \mid N_{\mathrm{rs}}=n_{\mathrm{rs}}, \Ds^{\mathrm{rs}}_1 \right] > 1-\delta -\frac{C_4}{\sqrt{m}},
	$$
	where $C_4=C_3+\frac{\sqrt{-2\log\delta}+1}{2\Delta_\epsilon}$ is derived from straightforward calculations. After marginalizing, we have for $n_{\mathrm{rs}} \geq \lfloor m_1/\gamma \rfloor +1$,
	$$
	\Pb\left[\epsilon' < L_{P^{\pi_e}}(\hat{C}_{\tilde{\tau}})\leq\epsilon \mid N_{\mathrm{rs}}=n_{\mathrm{rs}} \right] > 1-\delta -\frac{C_4}{\sqrt{\gamma n_{\mathrm{rs}}}}.
	$$
	Finally, similarly to the proof of Theorem \ref{lem-upperbound}, we obtain for $n\geq \lfloor m_1/\gamma\rfloor +1 $ that
	\begin{align*}
		\Pb\left[\epsilon' < L_{P^{\pi_e}}(\hat{C}_{\tilde{\tau}})\leq\epsilon \right] &> \sum_{n_{\mathrm{rs}}=\lfloor m_1/\gamma\rfloor +1}^{n}\Pb(N_{\mathrm{rs}}=n_{\mathrm{rs}})(1-\delta -\frac{C_4}{\sqrt{\gamma n_{\mathrm{rs}}}})\\
		&\geq (1-\delta)(1-F_{\Bin(n,1/B)}(\lfloor m_1/\gamma \rfloor )) - \frac{C_4B}{\sqrt{\gamma n}}\\
		& \geq 1-\delta - \frac{(1-\delta)C_5 + C_4B/\sqrt{\gamma}}{\sqrt{n}},
	\end{align*}
	where $C_5 = \sqrt{B \lfloor m_1/\gamma \rfloor} + B/2$. As it also holds for $n\leq \lfloor m_1/\gamma\rfloor$, we complete the proof.
\end{proof}

\subsection{Proof of Theorem \ref{thm-asymptotic}}\label{Proof of Theorem thm-asymptotic}
This proof largely follows the argument of Theorem 1 in \cite{sesia2020comparison}, although the threshold $\tilde{\tau}$ employed here differs from the $1-\epsilon$ empirical quantile used therein. The following consistency assumption, which is analogous to Assumption A4 in \cite{lei2018distribution}, is weaker than requiring $L_2$-convergences and can be satisfied for certain algorithms under appropriate conditions, such as random forests \citep{meinshausen2006quantile}.

\begin{ass}\label{ass-consistency}
	Denote by $l$ the size of the training set $\Ds^{\mathrm{rs}}_1$ used to fit the conditional quantile functions $\hat{q}_{\epsilon_{\mathrm{lo}}}$ and $\hat{q}_{\epsilon_{\mathrm{up}}}$. For sufficiently large $l$, the following conditions hold:
	\begin{align*}
		\Pb\left[ \Eb \left[ \big(\hat{q}_{\epsilon_{\mathrm{lo}}}(S_{n+1}) -q_{\epsilon_{\mathrm{lo}}}(S_{n+1}) \big)^2 \mid \Ds^{\mathrm{rs}}_1 \right] \leq \eta_l/2 \right] &\geq 1- \rho_l/2, \\
		\Pb\left[ \Eb \left[ \big(\hat{q}_{\epsilon_{\mathrm{up}}}(S_{n+1}) -q_{\epsilon_{\mathrm{up}}}(S_{n+1}) \big)^2 \mid \Ds^{\mathrm{rs}}_1 \right] \leq \eta_l/2 \right] &\geq 1- \rho_l/2,
	\end{align*}
	for some sequences $\eta_l = o(1)$ and $\rho_l = o(1)$, as $l\to\infty$. 
\end{ass}
In addition, a regularity assumption is needed.
\begin{ass}\label{ass-regularity}
	For $(S,R)\sim P^{\pi_e}$, the probability density of the random variable
	$$
	T\coloneq\max\{q_{\epsilon_{\mathrm{lo}}}(S) - R, R - q_{\epsilon_{\mathrm{up}}}(S)\}
	$$
	is bounded away from zero in a neighborhood of zero.
\end{ass}

\begin{proof}
	It suffices to show that, as $|\Ds_{\mathrm{rs}}|=n^{\mathrm{rs}}\to\infty$,
	\begin{enumerate}[label=(\roman*)]
		\item $|\hat{q}_{\epsilon_{\mathrm{lo}}}(S_{n+1})-q_{\epsilon_{\mathrm{lo}}}(S_{n+1})|=o_\Pb(1)$ and $|\hat{q}_{\epsilon_{\mathrm{up}}}(S_{n+1})-q_{\epsilon_{\mathrm{up}}}(S_{n+1})|=o_\Pb(1)$;
		\item $\tilde{\tau}=o_\Pb(1)$.
	\end{enumerate}
	Then the proof will be completed by the triangle inequality.
	
	(i) Define the random set
	$$
	B_{l,\mathrm{lo}}=\{s:|\hat{q}_{\epsilon_{\mathrm{lo}}}(s)-q_{\epsilon_{\mathrm{lo}}}(s) | \geq \eta_l^{1/3}\}, B_{l,\mathrm{up}}=\{s:|\hat{q}_{\epsilon_{\mathrm{up}}}(s)-q_{\epsilon_{\mathrm{up}}}(s) | \geq \eta_l^{1/3}\},
	$$
	and $B_l=B_{l,\mathrm{lo}} \cup B_{l,\mathrm{up}}$. We have by Markov's inequality and Assumption \ref{ass-consistency} that
	\begin{align*}
		&\Pb(S_{n+1} \in B_l\mid \Ds^{\mathrm{rs}}_1)  \leq \Pb(S_{n+1}\in B_{l,\mathrm{lo}}\mid \Ds^{\mathrm{rs}}_1) + \Pb(S_{n+1}\in B_{l,\mathrm{lo}}\mid \Ds^{\mathrm{rs}}_1) \\
		= &\Pb( |\hat{q}_{\epsilon_{\mathrm{lo}}}(S_{n+1})-q_{\epsilon_{\mathrm{lo}}}(S_{n+1}) | \geq \eta_l^{1/3} \mid \Ds^{\mathrm{rs}}_1) + \Pb(|\hat{q}_{\epsilon_{\mathrm{up}}}(S_{n+1})-q_{\epsilon_{\mathrm{up}}}(S_{n+1}) | \geq \eta_l^{1/3} \mid \Ds^{\mathrm{rs}}_1)\\
		\leq & \eta_l^{-2/3}\Eb\left[ \big(\hat{q}_{\epsilon_{\mathrm{lo}}}(S_{n+1}) -q_{\epsilon_{\mathrm{lo}}}(S_{n+1}) \big)^2  \mid \Ds^{\mathrm{rs}}_1 \right] +  \eta_l^{-2/3}\Eb\left[ \big(\hat{q}_{\epsilon_{\mathrm{up}}}(S_{n+1}) -q_{\epsilon_{\mathrm{up}}}(S_{n+1}) \big)^2 \mid \Ds^{\mathrm{rs}}_1  \right]\\
		\leq  &\eta_l^{1/3},
	\end{align*}
	with probability at least $1-\rho_l$. This implies $|\hat{q}_{\epsilon_{\mathrm{lo}}}(S_{n+1})-q_{\epsilon_{\mathrm{lo}}}(S_{n+1})|=o_\Pb(1)$ and $|\hat{q}_{\epsilon_{\mathrm{up}}}(S_{n+1})-q_{\epsilon_{\mathrm{up}}}(S_{n+1})|=o_\Pb(1)$.

	(ii) Consider the following partition of the calibration set $\Ds^{\mathrm{rs}}_2$:
	$$
	\Ds^{\mathrm{rs}}_{2,a} = \{ (S_i,R_i)\in\Ds^{\mathrm{rs}}_2 : S_i \in B_l^c \} ,\; \Ds^{\mathrm{rs}}_{2,b} = \{ (S_i,R_i)\in\Ds^{\mathrm{rs}}_2 : S_i \in B_l \}.
	$$
	Since $B_l$ only depends on $\Ds^{\mathrm{rs}}_1$, the size of $\Ds^{\mathrm{rs}}_{2,b}$ conditional on $\Ds^{\mathrm{rs}}_1$ can be bounded using Hoeffding's inequality as
	$$
	\Pb(|\Ds^{\mathrm{rs}}_{2,b}| \geq m\eta_l^{1/3} + t) \leq \Pb( \sum_{(S_i,R_i)\in \Ds^{\mathrm{rs}}_2}  \mathbbm{1}(\{ S_i\in B_l \}) \geq m\Pb(S_i\in B_l) +t) \leq \exp\left( -\frac{2t^2}{m} \right).
	$$
	Choosing $t=C\sqrt{m\log m}$ leads to $|\Ds^{\mathrm{rs}}_{2,b}|=o_\Pb(m)=o_\Pb(n^{\mathrm{rs}})$.
	
	Now, for each $(S_i,R_i)\in\Ds^{\mathrm{rs}}_2 $, define $T_i=\max\{ q_{\epsilon_{\mathrm{lo}}}(S_i) - R_i, R_i - q_{\epsilon_{\mathrm{up}}}(S_i)\}$. By the definition (\ref{def-tilde-tau-i}) of $\tau_i$, 
	it can be easily derived that
	\begin{equation}\label{pf-thm-asymptotic-eq-1}
		|T_i-\tau_i|\leq \eta_l^{1/3}, \text{ for } i \text{ s.t. } (S_i,R_i)\in\Ds^{\mathrm{rs}}_{2,a}.
	\end{equation}
	Recall that $\tilde{\tau}=\tau_{(m-k(m,\epsilon,\delta))}$, we also define the $k$-th smallest $T_i$ as $T_{(k)}$. In addition, when restricted to the data set $\Ds^{\mathrm{rs}}_{2,a}$, define $\tau_{(k)}^a$ and ${T}_{(k)}^a$ as the $k$-th smallest $\tau_i$ and $T_i$ for $i$ s.t. $(S_i,R_i)\in\Ds^{\mathrm{rs}}_{2,a}.$ Moreover, as demonstrated in $(\ref{k-bound})$, a similar upper bound for $k(m,\epsilon,\delta)$ can be established as
	\begin{equation}\label{pf-thm-asymptotic-eq-2}
		m(\epsilon- \sqrt{ \frac{\log\delta}{-2m}} )\leq k(m,\epsilon,\delta) \leq m(\epsilon+\sqrt{\frac{\log(1-\delta)}{-2m}}).
	\end{equation}
	For $n^{\mathrm{rs}}$ large enough, without loss of generality, we assume $|\Ds^{\mathrm{rs}}_{2,b}|<m-k(m,\epsilon,\delta)$ because $|\Ds^{\mathrm{rs}}_{2,b}|=o_\Pb(m)$. Then, it can be straightforward verified that
	$$
	\tau_{(m-k(m,\epsilon,\delta) - |\Ds^{\mathrm{rs}}_{2,b}|)}^a \leq \tau_{(m-k(m,\epsilon,\delta))}\leq \tau_{(m-k(m,\epsilon,\delta))}^a.
	$$
	Together with (\ref{pf-thm-asymptotic-eq-1}), we have
	$$
	T_{(m-k(m,\epsilon,\delta) - |\Ds^{\mathrm{rs}}_{2,b}|)}^a -\eta_l^{1/3} \leq \tilde{\tau} \leq T_{(m-k(m,\epsilon,\delta))}^a+\eta_l^{1/3},
	$$
	which in turn yields
	$$
	T_{(m-k(m,\epsilon,\delta) - |\Ds^{\mathrm{rs}}_{2,b}|)} -\eta_l^{1/3} \leq \tilde{\tau} \leq T_{(m-k(m,\epsilon,\delta) +  |\Ds^{\mathrm{rs}}_{2,b}| )}  +\eta_l^{1/3}.
	$$
	Therefore, it suffices to prove $T_{(m-k(m,\epsilon,\delta) -  |\Ds^{\mathrm{rs}}_{2,b}| )}$ and $T_{(m-k(m,\epsilon,\delta) +  |\Ds^{\mathrm{rs}}_{2,b}| )}$ is $o_\Pb(1)$.
	
	Actually, for any sufficiently small $\alpha>0$, we have $\epsilon_\alpha=:\Pb(T_i>\alpha) < \epsilon $ for each $T_i$ by Assumption \ref{ass-regularity} and the fact that $\Pb(T_i> 0)=\epsilon$. Then, Hoeffding's inequality and (\ref{pf-thm-asymptotic-eq-2}) yields for sufficiently large $m$ that
	\begin{align*}
		\Pb( T_{(m-k(m,\epsilon,\delta) +  |\Ds^{\mathrm{rs}}_{2,b}| )} \leq \alpha ) &= F_{\Bin(m,\epsilon_\alpha)} (k(m,\epsilon,\delta) -  |\Ds^{\mathrm{rs}}_{2,b}| )\\&\geq 1- \exp\left( -2m[\frac{k(m,\epsilon,\delta) -  |\Ds^{\mathrm{rs}}_{2,b}| }{m} -  \epsilon_\alpha]^2\right).
	\end{align*}
	One can prove in a similar manner that $\Pb( T_{(m-k(m,\epsilon,\delta) +  |\Ds^{\mathrm{rs}}_{2,b}| )} \geq -\alpha )\to 1$ as $m\to \infty$. Thus, we obtain $ T_{(m-k(m,\epsilon,\delta) +  |\Ds^{\mathrm{rs}}_{2,b}|)} = o_\Pb(1)$, and $ T_{(m-k(m,\epsilon,\delta) - |\Ds^{\mathrm{rs}}_{2,b}|)} = o_\Pb(1)$ analogously. 
\end{proof}

\subsection{Proof of Theorem \ref{thm-est-pib}}\label{pf-thm-est-pib}

\begin{proof}
	Conditional on a realization of $\Ds_1$, it can be shown, analogous to Proposition \ref{pro-1}, that each sample in $\Ds^{\mathrm{rs}}_2$ independently follows the joint distribution
	\begin{equation*}
		\hat{P}( s, r)=\hat{P}( s, r;\Ds_1)\coloneq P_S(s) \int_\As\frac{\hat{w}(s,a)}{\Eb_{\Ds_1}[\hat{w}(S,A)]} \pi_b(a|s)P_{R}(r|s,a) \drm a.
	\end{equation*}
	By assuming the test data $(S_{n+1},R_{n+1})\sim\hat{P}$, we can obtain from Theorem \ref{thm-1} that
	\begin{equation}\label{pf-estpib-eq1}
		\Pb[L_{\hat{P}}(\hat{C}_{\tilde{\tau}})\leq\epsilon\mid \Ds_1]\geq 1-\delta.
	\end{equation}
	Denote by $d_{\text{TV}}(\hat{P},P^{\pi_e})$ the total variation distance between $\hat{P}$ and $P^{\pi_e}$. We have
	\begin{align*}
		d_{\text{TV}}(\hat{P},P^{\pi_e})&=\frac{1}{2}\int_\Ss\int_\Rb\left|\hat{P}(s,r)-P^{\pi_e}( s, r) \right|\drm r \drm s \\
		&\leq  \frac12\int_\Ss\int_\Rb\int_\As\left| \frac{\hat{w}(s,a)}{\Eb_{\Ds_1}[\hat{w}(S,A) ]} - w(s,a)\right| P_{R}(r|s,a) \pi_b(a|s)P_S(s)\drm a\drm r \drm s  \\
		&= \frac12\Eb_{\Ds_1}  \left| \frac{\hat{w}(S,A)}{\Eb_{\Ds_1}[\hat{w}(S,A) ]} - w(S,A)\right|\\
		&\leq \frac12  \Eb_{\Ds_1}  \left| \hat{w}(S,A) - w(S,A)\right| + \frac12 \Eb_{\Ds_1}  \left| \frac{\hat{w}(S,A)}{\Eb_{\Ds_1}[\hat{w}(S,A) ]} - \hat{w}(S,A)\right| \\
		&\leq \frac12  \Eb_{\Ds_1}  \left| \hat{w}(S,A) - w(S,A)\right| + \frac12   \left| \Eb_{\Ds_1}[\hat{w}(S,A)]-1\right| \\
		&\leq  \Eb_{\Ds_1}  \left| \hat{w}(S,A) - w(S,A)\right| = \Delta_w,
	\end{align*}
	as $\Eb_{\Ds_1}[w(S,A) ]=1$. By (\ref{pf-estpib-eq1}) and the fact  $\left|L_{\hat{P}}(\hat{C}_{\tilde{\tau}})-L_{P^{\pi_e}}(\hat{C}_{\tilde{\tau}})\right| \leq  d_{\text{TV}}(\hat{P},P^{\pi_e}),$ we have
	$$
	\Pb\left [L_{P^{\pi_e}}(\hat{C}_{\tilde{\tau}})\leq\epsilon+\Delta_w \right]
	\geq \Pb\left [L_{\hat{P}}(\hat{C}_{\tilde{\tau}})\leq\epsilon \right]
	\geq 1-\delta.
	$$
	Thus, (\ref{thm-est-pib-eq1}) is obtained. (\ref{thm-est-pib-eq2}) follows similarly and is omitted here. 
\end{proof}

\subsection{Proof of Theorem \ref{thm-MLE}}\label{pf-thm-MLE}

\begin{proof}
	By Jensen's inequality and the fact that $\|p-q\|_{L_1}=2d_{\text{TV}}(p,q)$ for two distributions $p$ and $q$, we have
	\begin{align*}
		\Eb_{ \Ds_1} \left| \hat{w}(S,A)-w(S,A)\right|&=\int_\Ss\int_\As\left| \frac{\pi_e}{\hat{\pi}_b}( a|s)-\frac{\pi_e}{\pi_b}( a|s) \right|\pi_b(a|s)P_S( s)\drm a \drm s \\
		&=\int_\Ss\int_\As \frac{\pi_e}{\hat{\pi}_b}( a|s) \left|\pi_b( a|s) -\hat{\pi}_b( a|s) \right|P_S( s)\drm a \drm s \\
		&\leq B_{\Pi}\int_\Ss\int_\As  \left|\pi_b( a|s) -\hat{\pi}_b( a|s) \right|P_S( s) \drm a \drm s\\
		&\leq B_{\Pi}\left( \int_\Ss\left| \int_\As  \left|\pi_b(a|s) -\hat{\pi}_b(a|s) \right|\drm a \right|^2 P_S( s) \drm s \right)^{1/2}  \\
		&= 2B_{\Pi}\left( \int_\Ss [d_{\text{TV}}(\pi_b(\,\cdot\mid s),\hat{\pi}_b(\,\cdot\mid s))]^2  P_S(s)\drm s \right)^{1/2}.
	\end{align*}
	The proof then follows from (\ref{thm-est-pib-eq1}) and Theorem 21 in \citep{agarwal2020advances}, which shows that, with probability at least $1-\delta_0$,
	$$
	\Delta_w=\Eb_{ \Ds_1 } \left| \hat{w}(S,A)-w(S,A)\right|\leq 2B_{\Pi}\sqrt{\frac{2\log(|\Pi|/\delta_0)}{|\Ds_1|}}.
	$$
\end{proof}

\end{document}